\newtheorem{theorem}{Theorem}[]
\newtheorem{lemma}[]{Lemma}
\newtheorem{corollary}[]{Corollary}
\newtheorem{remark}[]{Remark}
\newtheorem{Assumption}[]{Assumption}
\newtheorem{definition}[]{Definition}
\newcolumntype{C}{>{\centering\arraybackslash}X} 
\title{Fast, Parameter free Outlier Identification for Robust PCA}
\begin{document}
\author{\IEEEauthorblockN{Vishnu Menon,
		Sheetal Kalyani\\}
	\IEEEauthorblockA{
		Department of Electrical Engineering, Indian Institute of Technology Madras\\
		Chennai, India - 600036\\
		Email: ee16s301@ee.iitm.ac.in,
		skalyani@ee.iitm.ac.in,
		}}
\maketitle
\begin{abstract}
	Robust PCA, the problem of PCA in the presence of outliers has been extensively investigated in the last few years. Here we focus on Robust PCA in the column sparse outlier model. The existing methods for column sparse outlier model assumes either the knowledge of the dimension of the lower dimensional subspace or the fraction of outliers in the system. However in many applications knowledge of these parameters is not available. Motivated by this we propose a parameter free outlier identification method for robust PCA which a) does not require the knowledge of outlier fraction, b) does not require the knowledge of the dimension of the underlying subspace, c) is computationally simple and fast. Further, analytical guarantees are derived for outlier identification and the performance of the algorithm is compared with the existing state of the art methods.
\end{abstract}
\section{Introduction}
Principal Component Analysis (PCA) \cite{jolliffe2002principal} is a very widely used technique in data analysis and dimensionality reduction. Singular Value Decomposition (SVD) of the data matrix $\textbf{M}$ \cite{shlens2014tutorial} is known to be very sensitive to extreme corruptions in the data \cite{candes2011robust}, \cite{xu2010robust}, \cite{rahmani2016coherence} and hence robustifying the PCA process becomes a necessity. Robust PCA is typically an ill posed problem and it is of significant importance in a wide variety of fields like computer vision, machine learning, survey data analysis and so on. Recent survey papers \cite{vaswani2018static}, \cite{lerman2018overview} outline the various existing techniques for robust subspace recovery and robust PCA. Of the numerous approaches to robust PCA over the years \cite{ammann1993robust}, \cite{de2003framework}, one way to model extreme corruptions in the given data matrix $\textbf{M}$, is using the following decomposition \cite{wright2009robust}, \cite{chandrasekaran2011rank}, \cite{zhou2010stable}, \cite{candes2011robust}:
\begin{equation}\label{emodel}
\textbf{M}  = \textbf{L}+\textbf{S}, 
\end{equation}
where $\textbf{S}$ encapsulates all the corruptions and is assumed to be sparse and $\textbf{L}$ is low rank. Thus robust PCA becomes a process of decomposing the given matrix into a low rank matrix plus a sparse matrix. The problem can be formulated as a convex
problem, using techniques of convex relaxation inspired from
compressed sensing \cite{candes2006compressive}, as \cite{wright2009robust}, \cite{candes2011robust}
\begin{equation}\label{ereformul}
\begin{aligned}
& \underset{\textbf{L}, \textbf{S}}{\text{minimize}}
& & \|\textbf{L}\|_*+\lambda \|\textbf{S}\|_1
& \text{s.t} && \textbf{M} = \textbf{L}+\textbf{S}, 
\end{aligned}
\end{equation}
where $ \|\textbf{L}\|_*$ is the nuclear norm computed as the sum of singular values of a matrix and $\|\textbf{S}\|_1$ is the $l_1$ norm of vector formed by vectorizing the matrix. In \cite{candes2011robust}, an optimal value for $\lambda$ was proposed and theoretical guarantees for the exact recovery of the low rank matrix was given assuming the popular uniform sparsity model. To solve (\ref{ereformul}), several algorithms were proposed including  \cite{yi2016fast}, \cite{hsu2011robust}, \cite{chiang2016robust} with the aim of reducing the complexity of the process and improving speed and performance. Non-Convex algorithms have also been proposed for robust PCA\cite{netrapalli2014non}, \cite{kang2015robust} which are significantly faster than convex programs. 

Another popular model, the one that we will adopt in this paper, is the outlier model\footnote{Throughout the paper, the term outlier model indicates the model where each column of $\textbf{M}$ is either an inlier or an outlier}. In this model each column in $\textbf{M}$ is considered as a data point in $\mathbb{R}^n$. The points that lie in a lower dimensional subspace of dimension $r$ are the inliers and others which do not fit in this subspace are the outliers. The model is still (\ref{emodel}), but the matrix $\textbf{S}$ is column sparse. Several methods have been developed over the years, like methods based on influence functions \cite{jolliffe2002principal}, the re weighted least squares method \cite{ma2015generalized}, methods based on random consensus (RANSAC) \cite{fischler1987random}, based on rotational invariant $l_1$ norms \cite{ding2006r} etc for the outlier model. In \cite{xu2010robust}, a convex formulation of the process is given and iterative methods have been proposed to solve it. Also the problem has been extended to identifying outliers when the inlying points come from a union of subspaces as in \cite{soltanolkotabi2012geometric}, \cite{soltanolkotabi2014robust}. Recent works have attempted to develop simple non iterative algorithms for robust PCA with the outlier model  \cite{rahmani2016coherence}. Other methods which aims at solving robust PCA through this model include \cite{lerman2014fast}, \cite{zhang2014novel} and works based on thresholding like \cite{cherapanamjeri2017thresholding}. Most of the algorithms proposed are either iterative and complex and/or would require the knowledge of either the outlier fraction or the dimension of the low rank subspace or would have free parameters that needs to be set according to the data statistics. What we aim to do in this paper is to propose an algorithm for removal of outliers that is computationally simple, non iterative and parameter free. Once the outliers are removed from the data in the outlier model, the classical methods for PCA may be applied for subspace recovery. 
\subsection{Related work}
We briefly describe some of the key literature in the area of robust PCA and highlight how our proposed work differs from and/or is inspired by them. The popular work \cite{candes2011robust}, assuming a uniform sparsity model on the corruptions, solves (\ref{ereformul}) using Augmented Lagrange Multiplier (ALM) \cite{lin2010augmented} which is an iterative process that requires certain parameters to be set. Ours uses an outlier model and hence we cannot compare our method with the work  in \cite{candes2011robust}. In an outlier model, \cite{xu2010robust}, proposes solving the following convex optimization problem for robust PCA:
\begin{equation}
\begin{aligned}
& \underset{\textbf{L}, \textbf{S}}{\text{minimize}}
& & \|\textbf{L}\|_*+\lambda \|\textbf{S}\|_{1, 2}
& \text{s.t} && \textbf{M} = \textbf{L}+\textbf{S}, 
\end{aligned}
\end{equation}
where $\|\textbf{S}\|_{1, 2}$ is the sum of $l_2$ norms of the columns of the matrix. The paper also proposes a value for the parameter, namely $\lambda = \frac{3}{7\sqrt{\gamma N}}$, where $\gamma$ is the fraction of outliers in the system. While \cite{xu2010robust} assumes the knowledge of $\gamma$, in many cases $\gamma$ is typically unknown. Another recent work \cite{cherapanamjeri2017thresholding} that bases its algorithms on thresholding also requires the knowledge of the target rank, i.e the dimension of the subspace. The work in \cite{soltanolkotabi2012geometric} analyzes the removal of outliers from a system where the inliers come from a union of subspaces and involves solving multiple $l_1$ optimization problems. While there exists a lot of existing techniques and algorithms \cite{yang2010fast} for solving the $l_1$ optimization problem, most of them requires certain parameters to be set and are iterative. After solving the optimization problem, a data point is classified as an inlier or outlier using thresholding in \cite{soltanolkotabi2012geometric}. Although the proposed threshold in \cite{soltanolkotabi2012geometric} is independent of the dimension of the subspace $r$ or the number of outliers, the underlying optimization problem is not parameter free and since multiple optimization problems have to be solved, the procedure is also rather complex.

A fast algorithm for robust PCA was recently proposed in \cite{rahmani2016coherence} which involves looking at the coherence of the data points with other points and identifying outliers as those points which have less coherence with the other points. The authors give theoretical guarantees for the working of the algorithm for the outlier model. In the two methods that have been proposed for identifying the true subspace, knowledge of either the number of outliers or the dimension of the underlying subspace is required. The algorithm proposed for outlier removal in our work is in spirit a parameter free extension to the work in \cite{rahmani2016coherence}. 
\subsection{Motivation and proposed approach}
The main motivation behind this work is to build parameter free algorithms for robust PCA. By parameter free we mean an algorithm which does not require the knowledge of parameters such as the dimension of true subspace or the number of outliers in the system nor it has a tuning parameter which has to be tuned according to the data. Tuning parameters in any algorithm present a challenge, as the user then would have to decide either through cross validation  \cite{arlot2010survey} or prior knowledge on how to set them. Recently, there have been attempts to make algorithms parameter free in the paradigm of sparse signal recovery \cite{kallummil2017signal, vats2014path}, \cite{stoica2012spice}, \cite{lederer2015don} and these were shown to have results comparable with the ones when the true parameters such as the sparsity of the signal were known. Motivated by this, in this paper we propose a parameter free algorithm for robust PCA. While there exists a vast literature on robust PCA algorithms, to make parameter free variants of them, one would have come up with novel modifications for each of them separately. 
In this work, we focus on obtaining a computationally efficient parameter free algorithm for outlier removal in robust PCA. The recent work in \cite{rahmani2016coherence} is both simple and non iterative in the sense that it is a one shot process which does not involve an iterative procedure to solve an optimization problem like in \cite{soltanolkotabi2012geometric}. However it is not parameter free. We propose an algorithm for robust PCA in the outlier model, which does not require the knowledge of number of outliers or the dimension of the underlying subspace and our key contributions are:
\begin{itemize}
	\item[i] We will demonstrate through theory and simulations, that the proposed parameter free algorithm identifies all the outliers correctly with high probability in the outlier model, 
	\item[ii] The proposed method is threshold based and we derive a threshold, which does not require the knowledge of the dimension of the underlying subspace and the number of outliers present, for its calculation. This threshold lower bounds the outlier scores with high probability even in presence of Gaussian noise irrespective of the noise variance. 
	\item [iii] Our algorithm guarantees outlier identification, however it does not guarantee that all inliers would be recovered correctly. We derive theoretical conditions under which a good percentage of inliers are recovered and also the conditions when the algorithm does not recover all the inliers points.
	\item [iv] We compare our results with existing algorithms for outlier removal and subspace recovery in terms of subspace recovery error and running time.
\end{itemize}
 Our hope is that this algorithm will serve as a starting point for further progress in parameter free algorithms for robust PCA.
\section{Problem setup and notations}\label{s1}
	We are given $N$ data points, each from an $n$ dimensional space $\mathbb{R}^n$, denoted by $\textbf{m}_i \in \mathbb{R}^n$, arranged in a data matrix $\textbf{M} = [\textbf{m}_1, \textbf{m}_2...\textbf{m}_N] \in \mathbb{R}^{n\text{x}N}$. In this paper we will be working with  $l_2$ normalized data points, namely $\textbf{x}_i = \dfrac{\textbf{m}_i}{\|\textbf{m}_i\|_2}$. Here $ \|.\|_2$, denotes the $l_2$ norm. Let the normalized data matrix be denoted as $\textbf{X} = [\textbf{x}_1, \textbf{x}_2...\textbf{x}_N]$. Let $\mathbb{S}^{n-1}$ denote the unit hypersphere in $\mathbb{R}^n$. Then $\mathbb{S}^{n-1} = \{\textbf{x}\text{	} |\text{	}\textbf{x}\in \mathbb{R}^n, \|\textbf{x}\|_2 = 1\}$, i.e the $l_2$ ball in $\mathbb{R}^n$ and all points in $\textbf{X} \in \mathbb{S}^{n-1}$. We assume that out of the $N$ data points, $(1-\gamma)N$ of them lie in a low dimensional subspace $\mathcal{U}$ of dimension $r$, those we will refer to as inliers and the rest $\gamma N$ points are randomly spread out on the $n$ dimensional space, we will call them outliers. The parameters $\gamma$ which is the ratio of number of outliers to the total number of data points and $r$, dimension of the true subspace, are unknown. Let $\mathcal{I}$ denote the index set of inliers and $\mathcal{O}$ denote the index set of outliers, i.e $\mathcal{I} = \{i\text{		}| \text{		}\textbf{x}_i \text{ is an inlier}\}$ and $\mathcal{O} = \{i\text{		}| \text{		}\textbf{x}_i \text{ is an outlier}\}$. Hence the matrix $\textbf{X}$ can be segregated as $\textbf{X} = [\textbf{X}_\mathcal{I}, \textbf{X}_\mathcal{O}] $, where $\textbf{X}_\mathcal{I}$ are the set of inlier points and  $\textbf{X}_\mathcal{O}$ are the set of outlier points. Note that $|\mathcal{I}| = (1-\gamma)N$ and $|\mathcal{O}| =\gamma N$, where $|.|$ denotes the cardinality of a set. We will denote $N_\mathcal{I} =|\mathcal{I}|$. In this paper the following assumption is made on outliers (same as Assumption 1 in \cite{rahmani2016coherence}).
\begin{Assumption}\label{amain}
The subspace $\mathcal{U}$ is chosen uniformly at random from the set of all $r$ dimensional subspaces and the inlier points are sampled uniformly at random from the intersection of $\mathcal{U}$ and $\mathbb{S}^{n-1}$. The outier points are sampled uniformly at random from $\mathbb{S}^{n-1}$.
\end{Assumption}
 The problem we will be focusing on is to remove the set of outliers from the matrix or to find  $\mathcal{O}$ without the knowledge of both the parameters $\gamma$ and $r$. We first list some essential definitions.
\begin{definition}
Let $\theta_{ij}$ denote the principal angle between two data points $\textbf{x}_i$ and $\textbf{x}_j$, i.e 
\begin{equation}\label{etheta}
\theta_{ij} = cos^{-1}(\textbf{x}_i^T\textbf{x}_j) \hskip70 pt \theta_{ij} \in [0, \pi]
\end{equation}
\end{definition}
\begin{definition}
	The acute angle between two points denoted by $\phi_{ij}$ is defined as 
	\begin{align}\label{ephi}
	\phi_{ij} &= cos^{-1}(|\textbf{x}_i^T\textbf{x}_j|)\\
	&=\begin{cases}
	\theta_{ij} & \text{for } \theta_{ij}\leq \dfrac{\pi}{2}\\
	\\
	\pi-\theta_{ij} & \text{for } \theta_{ij} > \dfrac{\pi}{2}\\
	\end{cases}
	\end{align}
\end{definition}
Clearly $\phi_{ij}\in [0, \frac{\pi}{2}]$. Also $\phi_{ii}= \theta_{ii} =0$. 
\begin{definition}
The minimum angle subtended by a point denoted as $q_i$ is given by 
\begin{align}\label{eq}
q_i = \underset{j=1, ..N, j\neq i}{\min\text{			}}\phi_{ij} \hskip30pt \forall i \in \{1, 2, ...N\}
\end{align}
\end{definition}
We also call $q_i$ as the score\footnote{We will, in this paper use the term "score" of a point to indicate the minimum acute angle from the set of all acute angles subtended by a data point with other points.} of a point indexed by $i$. Let $q_{(i)}$ represent the values of $q_i$ sorted in descending order, i.e $q_{(1)}\geq q_{(2)} \geq... q_{(N)}$. Let $L$ be the index set corresponding to values of $q_i$ after sorting in descending order. From here on for convenience, we will index the sorted scores by $l$, i.e $q_l$ corresponds to the $q_i$ values after sorting in descending order or $q_l \geq q_{l+1}$. We define:
\begin{definition}\label{dqo}
	Denote the minimum outlier score as $q_\mathcal{O}$, then $q_{\mathcal{O}} = \underset{i \in \mathcal{O}}{\min}\text{		}q_i$
\end{definition}
\begin{definition}\label{dqi}
	Denote the maximum inlier score as $q_\mathcal{I}$, then $	q_\mathcal{I}  = \underset{i \in \mathcal{I}}{\max}\text{	} q_i$
\end{definition}
Now we will also define two properties that characterizes an algorithm for outlier removal.
\begin{definition}[Outlier Identification Property, OIP($\alpha$) ]\label{doip}
	An algorithm for outlier removal is said to have Outlier Identification Property OIP($\alpha$), when the outlier index set estimate of the algorithm contains all the true outlier indices i.e $\hat{\mathcal{O}} \supseteq \mathcal{O}$ with a probability at least $1-\alpha$. 
\end{definition}
\begin{definition}[Exact recovery Property, ERP($\alpha$)]\label{derp}
	An algorithm for outlier removal is said to have Exact Recovery Property, ERP($\alpha$) when it recovers all the inlier points or $\hat{\mathcal{I}} = \mathcal{I}$ with a probability at least $1-\alpha$.
\end{definition}
ERP($\alpha$) is a stronger condition than OIP($\alpha$). An algorithm which has ERP($\alpha$) will also have OIP($\alpha$) as in this case, $\hat{\mathcal{O}}=\mathcal{O}$ with a probability at least $1-\alpha$. 

\textbf{Other Notations}: Let $\Gamma(.)$ denote the gamma function. $\mathbb{E}[Y]$ denotes the expectation, $var(Y)$ the variance and $\sigma_Y$ the standard deviation of the random variable $Y$. $\mathcal{N}(\mu, \sigma^2)$ denote a normal distribution with mean $\mu$ and variance $\sigma^2$. Let $F_{\mathcal{N}}(.)$ denote the standard normal cdf, $F_{\mathcal{N}}(y) =\dfrac{1}{\sqrt{2\pi}} \int\limits_{-\infty}^{y}e^{\frac{-x^2}{2}}dx$. 
\section{Algorithm and features}\label{s2}
We will first discuss in brief the coherence pursuit (CoP) algorithm in \cite{rahmani2016coherence}, since our work can be regarded as a parameter free variant of CoP. The basic principle behind CoP algorithm \cite{rahmani2016coherence} is that the inlier points are more coherent amongst themselves and the outliers are less coherent. Hence for each point a metric is computed as the norm (either $l_1$ or $l_2$ norm) of a vector in $\mathbb{R}^{N-1}$ whose components are the coherence values that a point has with all the other data points. The expectation is that once these metrics are sorted in descending order, the inliers come first as the outlier metrics are supposed to be much less compared to the inlier metrics. Then the authors have proposed two schemes to remove the outliers and recover the true underlying subspace. The first scheme tries to remove the outliers and then perform PCA to get the true subspace. Here the outlier removal process assumes the knowledge of the maximum number of outliers in the system. The second scheme is an adaptive column sampling technique that generates an $r$ dimensional subspace from inlier points, with the assumption that the parameter $r$ is known. 

The proposed scheme, works with angles instead of coherence, and the score that we compute is the minimum angle subtended by a point instead of the norm as is done in \cite{rahmani2016coherence}. We develop a high probability lower bound for outlier scores independent of the unknown parameters, which makes our algorithm parameter free.
\begin{figure*}[t]
	\begin{subfigure}[b]{0.24\textwidth}
		\includegraphics[width=\linewidth, height=4cm]{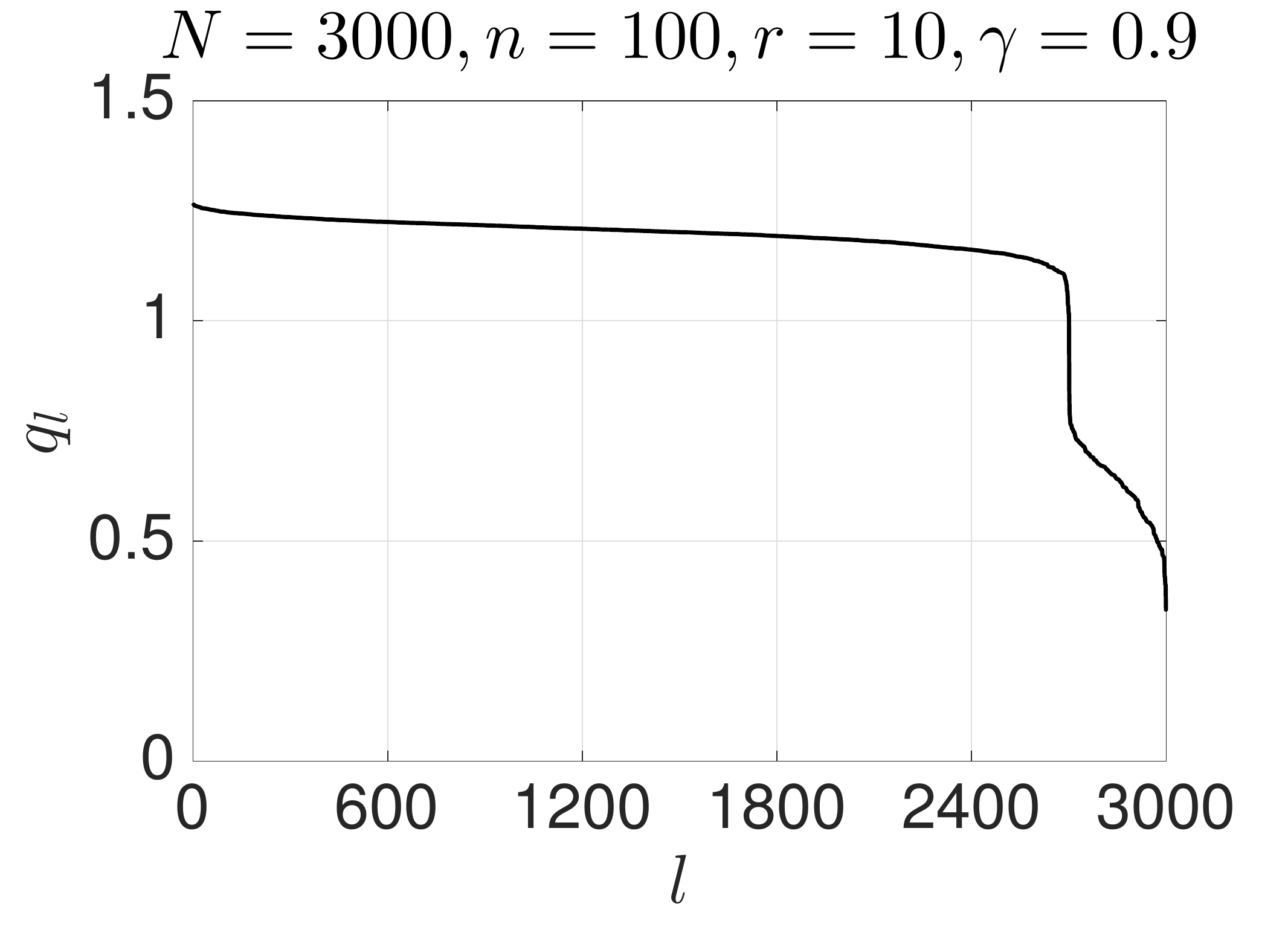}
		\caption{}
		\label{}
	\end{subfigure}
	\hfill
	\begin{subfigure}[b]{0.24\textwidth}
		\includegraphics[width=\linewidth, height=4cm]{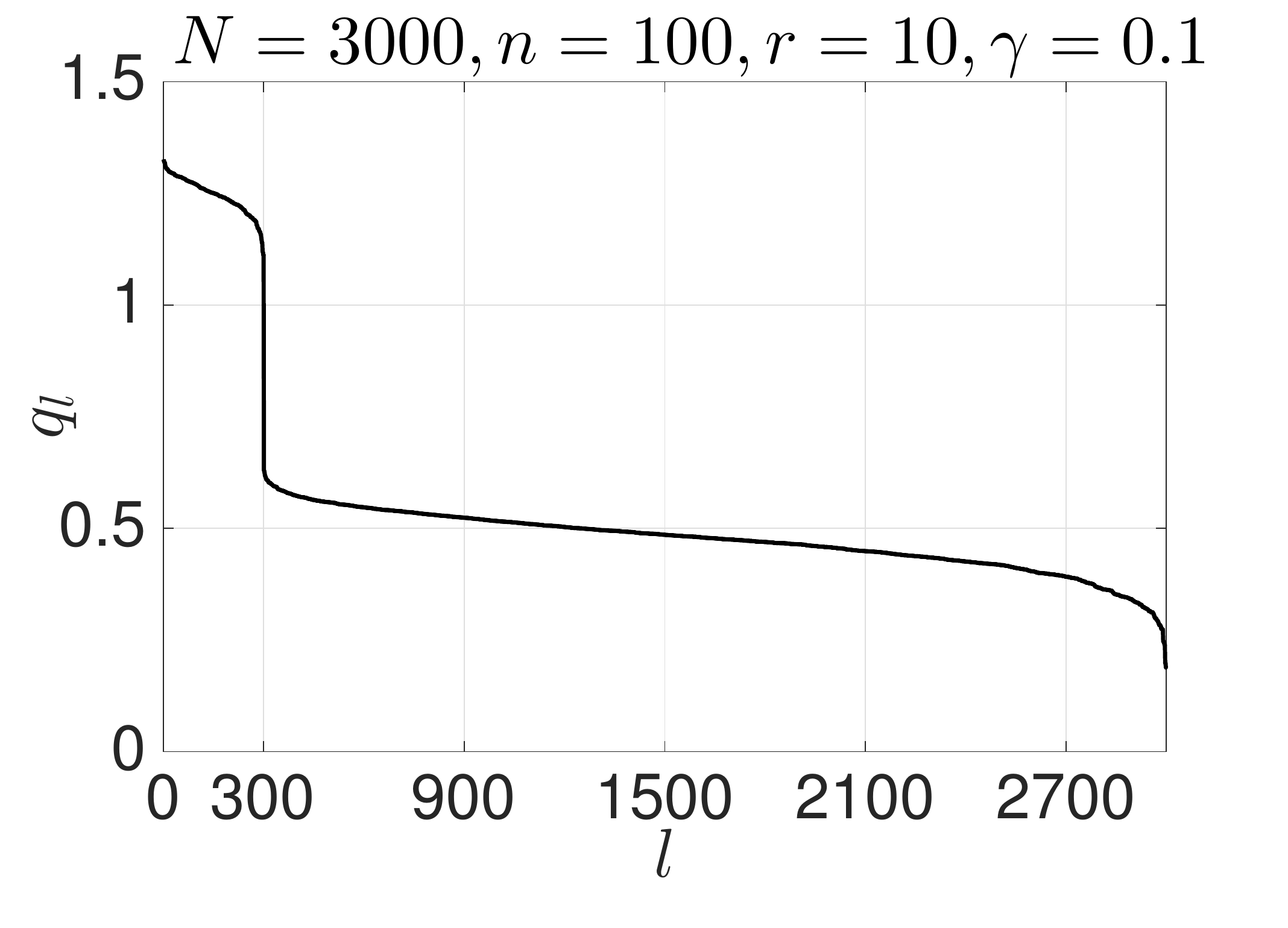}
		\caption{}
		\label{}
	\end{subfigure}
	\hfill
	\begin{subfigure}[b]{0.24\textwidth}
		\includegraphics[width=\linewidth, height=4cm]{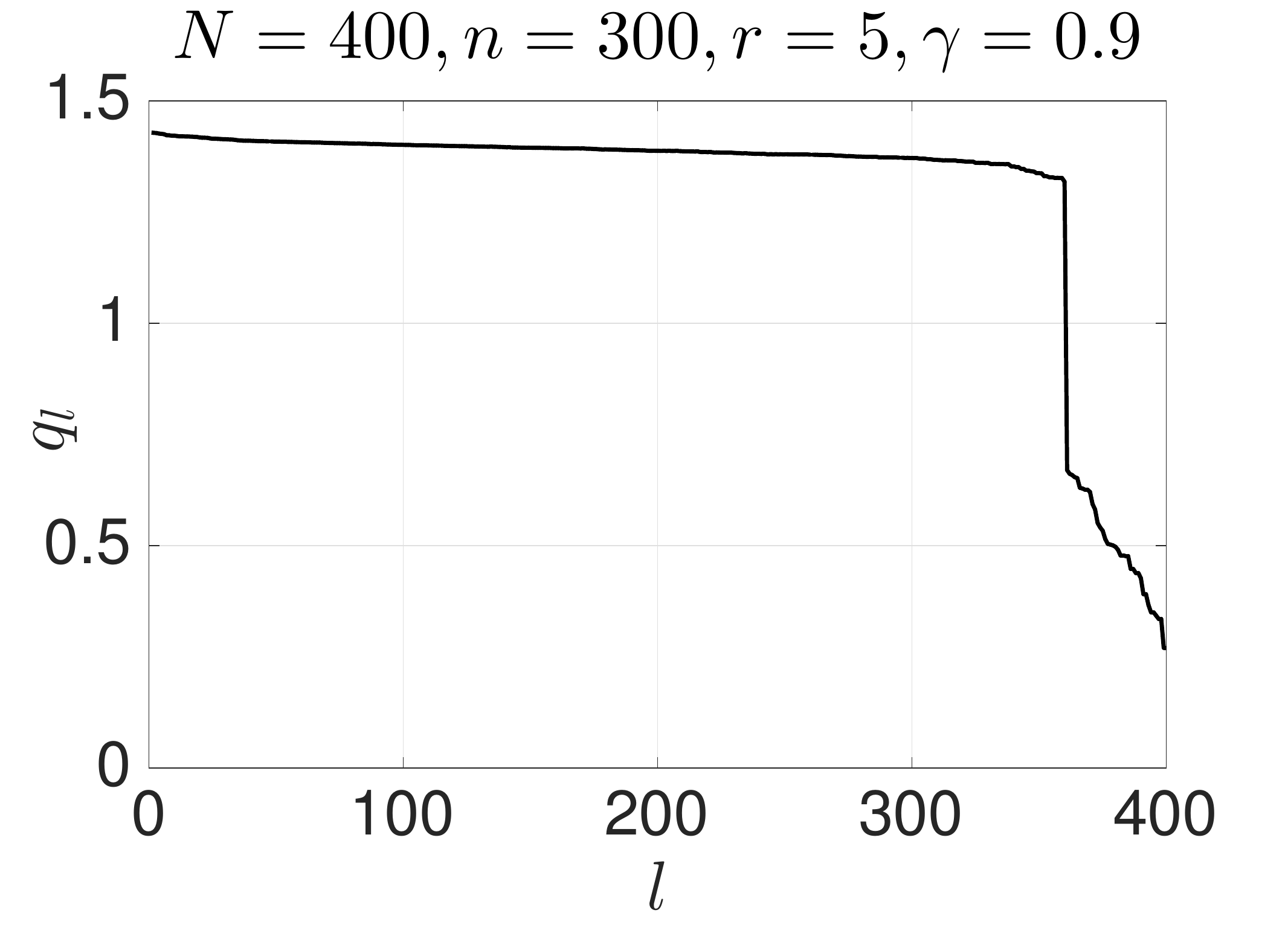}
		\caption{}
		\label{}
	\end{subfigure}
	\hfill
	\begin{subfigure}[b]{0.24\textwidth}
		\includegraphics[width=\linewidth, height=4cm ]{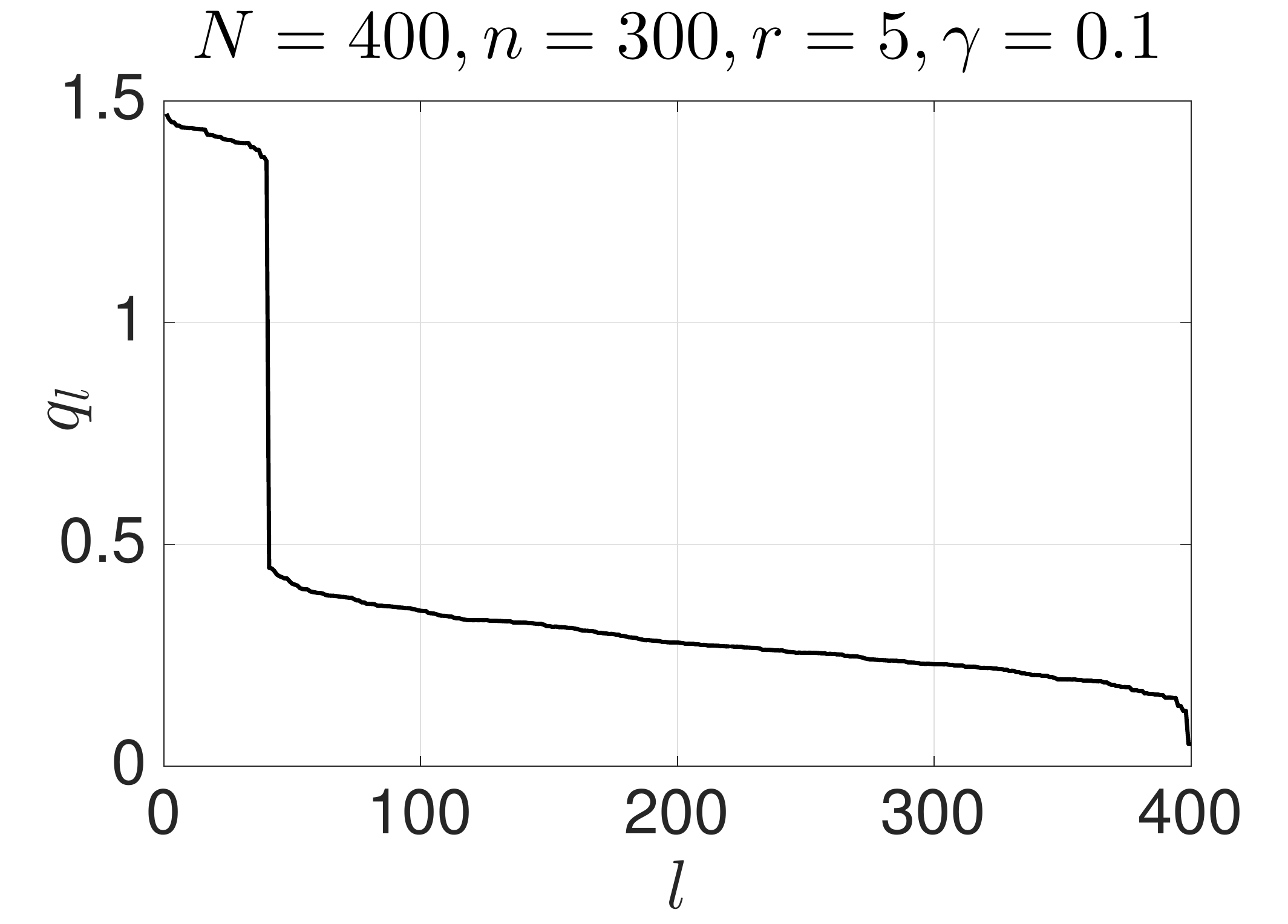}
		\caption{}
		\label{}
	\end{subfigure}
	\hfill
	\caption{Behaviour of $q_l$ for various values of $n$, $N$, $r$ combinations and varying $\gamma$}
	\label{fqsep}
\end{figure*}
 \subsection{Basic Principle and description}
 The folklore ``two high dimensional points are almost always orthogonal to each other'' has been rigorously proved in \cite{cai2013distributions} and this is what we exploit. The proposed algorithm works on the principle that, outlier points subtend larger angles (very close to $\frac{\pi}{2}$) with rest of the points, but inlier points, since they lie in a smaller dimensional subspace, subtend smaller angles with other inlier points and hence would have a much smaller score $q_i$ as compared to an outlier. An example of the clear separation between the $q_l$ values as a function of $l$ can be seen in Fig \ref{fqsep} for various values of $r$, $n, $, $N$, $\gamma$, which shows that the property holds even at low inlier fraction.
 So the expectation is that when the scores $q_i$ have been sorted in descending order, we will have all the outlier indices appearing first and then after a dip in $q_l$, the inlier indices appear. In the proposed method we will exploit this property to develop an algorithm that removes outliers and is also parameter free. In the algorithm we classify a point as an outlier whenever its score $q_i$ is greater than a threshold $\zeta$ given by
 \begin{equation}\label{eth}
 \begin{aligned}
 \zeta = &\Big[\dfrac{4\sqrt{\pi} \times\Gamma(\frac{n+1}{2})log(\frac{1}{1-\frac{\alpha}{2}})}{N^2\Gamma(\frac{n}{2})}\Big]^{\frac{1}{n-1}},
 \end{aligned}
 \end{equation}
$\zeta$ is such that ${P}(q_\mathcal{O}>\zeta) \geq 1-\alpha$, i.e it acts as a lower bound for $q_i$ (See Theorem \ref{tthr} for more details). 
The proposed algorithm is given in tabular form as Algorithm 1.
\begin{algorithm}[h]
	\caption{Removal of Outliers using Minimum Angle (ROMA)}
	\textbf{Input}:Data matrix $\textbf{M}$\\
	\textbf{Procedure:}
	\begin{algorithmic}[1]
		\State Define $\textbf{X}$, with columns $\textbf{x}_i=\frac{\textbf{m}_i}{\|\textbf{m}_i\|_2}$
		\State Calculate $\phi_{ij}$ for $i, j=1, 2..N$ as in (\ref{ephi})
		\State Set $\alpha=.05$
		\State Threshold, $\zeta \gets\Big[\dfrac{4\sqrt{\pi} \times\Gamma(\frac{n+1}{2})log(\frac{1}{1-\frac{\alpha}{2}})}{N^2\Gamma(\frac{n}{2})}\Big]^{\frac{1}{n-1}}$
		\State Calculate $q_i$ for $i=1, 2..N$ as in (\ref{eq})
		\State Sort $q_i$ in descending order and get $q_{l}$ 
		\State $L \gets$ Sorted indices
		\State $l^* \gets \underset{l}{\max}\{l\text{	}|\text{	}q_l >\zeta\}$ 
		\State $\hat{\mathcal{O}} \gets L(1:l^*)$, $\hat{\mathcal{I}} \gets L(l^*+1:N)$
	\end{algorithmic}
	\textbf{Output}: $\hat{\mathcal{I}}, \hat{\mathcal{O}} $ 
\end{algorithm}

The key steps are, 
\par i First the input data matrix is column normalized and the acute angles subtended by each point with other points as in (\ref{ephi}) are calculated for all data points. 
\par ii Then the score for each point $q_i$ is computed by taking the minimum of the angles subtended by that point as in (\ref{eq}). Then the scores are sorted in descending order to get $q_l$.
\par iii The last index $l^*$ until where $q_l$ remains above the threshold  $\zeta$ is noted as the boundary, all the sorted indices before and including this point are classified as outlier indices and the sorted indices beyond this point are classified as inlier indices. 

The algorithm which we will call Removal of Outliers using Minimum angle (ROMA), is a parameter free algorithm and requires as input only the data matrix. 
\subsection{Feature - Parameter free}
The main feature of the algorithm is that it does not have any dependencies on the unknown parameters. As seen clearly, the threshold we have proposed only requires $N$  and $n$ for its computation and is also independent of noise statistics. Once all the outliers have been identified and removed, the clean points can be used to obtain a low rank representation using classical PCA by SVD. For the noiseless case, PCA also does not require the knowledge of any parameter. In the presence of additive Gaussian noise $\textbf{w}_i$'s in the data, i.e when $\textbf{m}_i = \textbf{m}^0_i+\textbf{w}_i$, $\textbf{w}_i \sim \mathcal{N}(0, \sigma_{w}^2\textbf{I}_n)$, there are several methods for selecting the number of principal components after SVD like BIC \cite{rissanen1978modeling}, geometric AIC \cite{kanatani1998geometric}, and other recent methods proposed in \cite{choi2017selecting}, \cite{donoho2013optimal}. 
\subsection{Feature - Simplicity}
ROMA is a simple to implement algorithm and the main complexity lies in computing all the angles subtended. This requires computation of $N(N-1)$ angles and hence the complexity is $O(N^2)$. It is computationally similar to CoP\cite{rahmani2016coherence} for its step of outlier identification and hence has the same complexity. The second step to implement robust PCA, would be an SVD on the inlier points recovered by the algorithm, which if implemented without truncation, has a time complexity of $O(N^3)$\cite{golub2012matrix}. The algorithm does not involve solving a complex optimization problem and is not iterative. The running time comparisons are given in section \ref{snumsim}.
\section{Theoretical Analysis of ROMA}\label{s3}
In this section we address the following points:
\begin{itemize}
	\item[i] Give the reasoning behind why we choose the metric as the minimum acute angle to identify outliers
	\item[ii] Derive the high probability lower bound $\zeta$ on the outlier scores which is parameter free.
	\item[iii] While the lower bound is derived for a noise free scenario, we will show that it is able to handle Gaussian noise as well.
	\item[iv] Derive the conditions when the algorithm follows OIP($\alpha$) and when it does not follow ERP($\alpha$) given in Definitions \ref{doip} and \ref{derp}.
\end{itemize}

As a starting point, we will state two lemmas that describe the distribution of the principal angles $\theta_{ij}$'s made by the points. This involves a slight modification of Lemma 12 in \cite{cai2013distributions}, to distinguish the angles formed by an inlier and outlier.
\begin{lemma}\label{lthetao}
	When $i \in \mathcal{O}$, $\theta_{ij}$'s are identically distributed and it's probability density function is given by
	\begin{equation}
				h(\theta) = \dfrac{1}{\sqrt{\pi}}\dfrac{\Gamma(\frac{n}{2})}{\Gamma(\frac{n-1}{2})} (sin\theta)^{n-2} \hskip20pt \theta \in [0, \pi]
	\end{equation}
	It has an expected value of $\dfrac{\pi}{2}$. Also $h(\theta)$ is well approximated by a Gaussian pdf with mean $\frac{\pi}{2}$ and variance $\frac{1}{n-2}$.
\end{lemma}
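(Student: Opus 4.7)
The plan is to exploit the fact that when $i \in \mathcal{O}$, by Assumption \ref{amain}, $\textbf{x}_i$ is uniformly distributed on $\mathbb{S}^{n-1}$ independently of every other $\textbf{x}_j$. For any fixed $j \neq i$, I would condition on $\textbf{x}_j$ (irrespective of whether $j$ is an inlier or an outlier); then $\theta_{ij} = \cos^{-1}(\textbf{x}_i^T\textbf{x}_j)$ depends on $\textbf{x}_i$ only through its inner product with the fixed unit vector $\textbf{x}_j$. By rotational invariance of the uniform measure on $\mathbb{S}^{n-1}$, this inner product is distributed as the first coordinate of a uniform random unit vector $U \sim \text{Unif}(\mathbb{S}^{n-1})$. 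This distribution does not depend on $j$, which simultaneously establishes that the $\theta_{ij}$'s are identically distributed and reduces the density calculation to finding the distribution of $\cos^{-1}(U_1)$.

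Next, using the standard fact that $U_1$ has density $\tfrac{\Gamma(n/2)}{\sqrt{\pi}\,\Gamma((n-1)/2)}(1-u^2)^{(n-3)/2}$ on $[-1,1]$ (derivable from the surface-area formula for spherical caps, or by integrating the uniform surface measure in spherical coordinates), I would apply the change of variables $u = \cos\theta$ with Jacobian $|du/d\theta| = \sin\theta$. This yields $h(\theta) = \tfrac{1}{\sqrt{\pi}}\tfrac{\Gamma(n/2)}{\Gamma((n-1)/2)}(\sin\theta)^{n-2}$ on $[0,\pi]$, exactly as stated. The expected value $\mathbb{E}[\theta] = \pi/2$ then follows immediately from the symmetry $h(\pi - \theta) = h(\theta)$, which holds because $\sin(\pi-\theta)=\sin\theta$.

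For the Gaussian approximation, I would reparametrise as $\theta = \pi/2 + \phi$ and use $\sin\theta = \cos\phi$, so that the local behaviour near the mean is $(\sin\theta)^{n-2} = (\cos\phi)^{n-2} \approx \exp\!\left(-(n-2)\phi^2/2\right)$ via the second-order Taylor expansion of $\log\cos\phi$. Combined with Stirling's asymptotic $\Gamma(n/2)/\Gamma((n-1)/2) \sim \sqrt{n/2}$, the leading normalising constant matches that of a $\mathcal{N}(\pi/2, 1/(n-2))$ density. The only step requiring real care, and the main (if modest) obstacle, is justifying that contributions from the tails of $h$ are negligible so that the Taylor approximation may be used on the whole of $[0,\pi]$; this follows from concentration of measure on the high-dimensional sphere, since for large $n$ the mass of $h$ is exponentially concentrated in a window of width $O(1/\sqrt{n})$ around $\pi/2$, so extending the integration range to $\mathbb{R}$ incurs only lower-order error.
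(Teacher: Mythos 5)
Your proposal is correct, but it takes a genuinely different route from the paper. The paper's proof is essentially a citation: it invokes Lemma 12 of \cite{cai2013distributions} (reproduced as Lemma \ref{l1} in the appendix) for the density $h(\theta)$ and Remark \ref{r1} (validated in that reference) for the Gaussian approximation; the only original content is the observation that, under Assumption \ref{amain}, every point other than the outlier $\textbf{x}_i$ looks to it like an independent uniform point on $\mathbb{S}^{n-1}$, so the cited result applies verbatim. You instead derive everything from first principles: rotational invariance reduces $\textbf{x}_i^T\textbf{x}_j$ to the first coordinate of a uniform unit vector, whose Beta-type density you transform via $u=\cos\theta$ to obtain $h(\theta)$, with the mean following from the symmetry $h(\pi-\theta)=h(\theta)$ and the Gaussian approximation from a Laplace expansion of $\log\cos\phi$ together with Stirling for the normalising constant. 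Both arguments are sound; yours is self-contained where the paper defers the analytic work to the reference. One point where your argument is arguably cleaner: by conditioning on $\textbf{x}_j$ and using only the uniformity and independence of $\textbf{x}_i$, you handle the case $j\in\mathcal{I}$ (where $\textbf{x}_j$ is constrained to $\mathcal{U}\cap\mathbb{S}^{n-1}$ and is not itself marginally ``generic'' without appealing to the randomness of $\mathcal{U}$) without any extra discussion, whereas the paper needs the uniform randomness of the subspace $\mathcal{U}$ to argue that all other points are effectively uniform on the sphere from the outlier's viewpoint.
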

\begin{proof}
	Please refer to appendix \ref{app1}
\end{proof}

\begin{lemma}\label{lthetai}
	When $i, j \in \mathcal{I}$, $\theta_{ij}$'s are identically distributed and it's probability density function is given by
	\begin{equation}
	h(\theta) = \dfrac{1}{\sqrt{\pi}}\dfrac{\Gamma(\frac{r}{2})}{\Gamma(\frac{r-1}{2})} (sin\theta)^{r-2} \hskip20pt \theta \in [0, \pi]
	\end{equation}
	It has an expected value of $\dfrac{\pi}{2}$. Also $h(\theta)$ is well approximated by a Gaussian pdf with mean $\frac{\pi}{2}$ and variance $\frac{1}{r-2}$ whenever $r \geq 5$.
\end{lemma}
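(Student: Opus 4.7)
The plan is to reduce the statement to Lemma \ref{lthetao} by exploiting the fact that, under Assumption \ref{amain}, the inlier points are uniformly distributed on $\mathcal{U} \cap \mathbb{S}^{n-1}$, which is isometric (as a Riemannian manifold with its induced surface measure) to the unit sphere $\mathbb{S}^{r-1}$ inside $\mathbb{R}^r$. Consequently, two independent inliers $\mathbf{x}_i, \mathbf{x}_j$ can be regarded as two independent uniform points on $\mathbb{S}^{r-1}$, and the principal angle $\theta_{ij} = \cos^{-1}(\mathbf{x}_i^T \mathbf{x}_j)$ is preserved by this isometry. Therefore the distribution of $\theta_{ij}$ in the inlier case is obtained from the outlier result by formally replacing the ambient dimension $n$ with the subspace dimension $r$, yielding
\begin{equation*}
h(\theta) = \dfrac{1}{\sqrt{\pi}}\dfrac{\Gamma(\frac{r}{2})}{\Gamma(\frac{r-1}{2})} (\sin\theta)^{r-2}, \qquad \theta \in [0, \pi].
\end{equation*}

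To make the reduction rigorous, I would first fix an orthonormal basis of $\mathcal{U}$ and condition on $\mathbf{x}_i$; by the rotational invariance of the uniform measure on $\mathcal{U} \cap \mathbb{S}^{n-1}$, one can rotate within $\mathcal{U}$ so that $\mathbf{x}_i$ becomes the first basis vector, and $\mathbf{x}_j$ remains uniformly distributed on $\mathcal{U} \cap \mathbb{S}^{n-1}$. The inner product $\mathbf{x}_i^T \mathbf{x}_j$ is then just the first coordinate of $\mathbf{x}_j$ in this rotated basis, which has the same distribution as the first coordinate of a uniform point on $\mathbb{S}^{r-1}$. Invoking the argument in the proof of Lemma \ref{lthetao} (Appendix \ref{app1}) with $n$ replaced by $r$ gives the claimed density. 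The mean $\mathbb{E}[\theta_{ij}] = \pi/2$ then follows immediately from the symmetry $h(\pi - \theta) = h(\theta)$ on $[0,\pi]$.

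For the Gaussian approximation, I would substitute $\psi = \theta - \pi/2$ so that $\sin\theta = \cos\psi$, giving $h(\pi/2 + \psi) \propto (\cos\psi)^{r-2}$ for $\psi \in [-\pi/2, \pi/2]$. Writing $(\cos\psi)^{r-2} = \exp\big((r-2)\log\cos\psi\big)$ and using the Taylor expansion $\log\cos\psi = -\psi^2/2 - \psi^4/12 - O(\psi^6)$, the bulk of the density concentrates on $|\psi| = O(1/\sqrt{r-2})$, where the higher-order terms are negligible, so $(\cos\psi)^{r-2} \approx \exp\!\big(-(r-2)\psi^2/2\big)$. Matching the normalization constant requires showing that the ratio $\Gamma(r/2)/\big(\sqrt{\pi}\,\Gamma((r-1)/2)\big)$ agrees with $\sqrt{(r-2)/(2\pi)}$ up to lower-order terms, which is a standard consequence of Stirling's expansion of the Gamma function. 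The condition $r \geq 5$ is the mildest regime in which the variance $1/(r-2)$ is small enough for the Taylor/Laplace approximation to be numerically faithful.

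The main obstacle is the quantitative Gaussian approximation: the pointwise Taylor expansion is straightforward, but converting it into a claim of the form ``$h(\theta)$ is well approximated by $\mathcal{N}(\pi/2, 1/(r-2))$'' requires controlling the error uniformly on $[0,\pi]$, including the tails $|\psi|$ close to $\pi/2$ where $\cos\psi$ is no longer well approximated by $1 - \psi^2/2$. Since the density there is already exponentially small in $r$, a simple split-and-bound argument, combined with Stirling on the normalization constant, will suffice; this mirrors the corresponding step in the proof of Lemma \ref{lthetao}, which is why the authors almost certainly defer the details to the same appendix.
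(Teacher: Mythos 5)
Your proposal is correct and follows essentially the same route as the paper: identify $\mathcal{U}\cap\mathbb{S}^{n-1}$ with $\mathbb{S}^{r-1}$ via Assumption \ref{amain} and invoke the uniform-points-on-a-sphere density (Lemma \ref{l1}) with $n$ replaced by $r$, with the Gaussian approximation inherited from Remark \ref{r1}. The only difference is that you sketch the Laplace/Stirling derivation of the Gaussian approximation explicitly, whereas the paper simply cites the validation in \cite{cai2013distributions}.
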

\begin{proof}
Please refer to appendix \ref{app1}
\end{proof}
For the algorithm we use the acute angles $\phi_{ij}$'s instead of $\theta_{ij}$'s. Next we discuss the mathematical reason for it.
\subsection{Why use $\phi_{ij}$ instead of $\theta_{ij}$ and lower bound the outlier score?}
From Lemmas \ref{lthetao} and \ref{lthetai}, for an outlier point, the principal angle it makes with any other point, be it an inlier or outlier is typically concentrated around $\frac{\pi}{2}$ especially at large $n$. On the other hand, because the dimension $r$ of the subspace $\mathcal{U}$ is much smaller than $n$, the angle that an inlier makes with another inlier is more spread around $\frac{\pi}{2}$. If one wanted to classify a point indexed by $i$, as an inlier or outlier based on the minimum principal angle that it subtends, the following strategy may be employed. The point indexed by $i$ is an outlier when $\beta_l \leq \underset{j=1, 2...N, j\neq i}{\min}\theta_{ij} \leq \beta_u$, with $\beta_l$ and $\beta_u$ being very close to $\frac{\pi}{2}$ and it is an inlier when $\underset{j=1, 2...N, j\neq i}{\min}\theta_{ij}<\beta_l$ or when $\underset{j=1, 2...N, j\neq i}{\min}\theta_{ij}>\beta_u$. This would require looking into different classification regions and two thresholds, which can be avoided by using the acute angle $\phi_{ij}$. In this case the classification requires just one threshold, say $\beta_l$. When using $\phi_{ij}$ whose properties have been characterized in appendix \ref{app3}, the minimum acute angle that an outlier makes becomes very close to $\frac{\pi}{2}$ and a point may be classified as an outlier when $\underset{j=1, 2...N, j\neq i}{\min}\phi_{ij}\geq \beta_l$. For an inlier, it will be much smaller than $\frac{\pi}{2}$, and a point may be classified as an inlier, when $\underset{j=1, 2...N, j\neq i}{\min}\phi_{ij}< \beta_l$. Hence the problem of outlier identification reduces to finding one appropriate threshold to be applied on $q_i$ defined in (\ref{eq}). If we can derive a high probability lower bound $\zeta$ on $q_i$, $i \in \mathcal{O}$,  such that $\forall i \in \mathcal{O}$, $\mathbb{P}(q_i >\zeta) \geq 1-\alpha$, then we can use this as the threshold to classify a point as an inlier or outlier. Further for the algorithm to be parameter free, we derive $\zeta$ such that it only depends on the number of data points $N$ and the ambient dimension $n$, which are of course always known. The next subsection gives the derivation of the bound $\zeta$.
\subsection{Parameter free threshold lower bounding $q_i$, $i \in \mathcal{O}$}\label{ssmain}
Below theorem gives the lower bound $\zeta$ on outlier scores:
\begin{theorem}\label{tthr}
	Under Assumption \ref{amain}, the minimum acute angle subtended by an outlier point is greater than $\zeta = \Big[\dfrac{4\sqrt{\pi} \times\Gamma(\frac{n+1}{2})log(\frac{1}{1-\frac{\alpha}{2}})}{N^2\Gamma(\frac{n}{2})}\Big]^{\frac{1}{n-1}}$, with high probability of at least $1-\alpha$, i.e 
	\begin{equation}\label{ebnd}
	\begin{aligned}
	q_i > \Big[\dfrac{4\sqrt{\pi} \times\Gamma(\frac{n+1}{2})log(\frac{1}{1-\frac{\alpha}{2}})}{N^2\Gamma(\frac{n}{2})}\Big]^{\frac{1}{n-1}} &\hskip30pt \forall i \in \mathcal{O}\\
	&\text{w.p}>1-\alpha
	\end{aligned}
	\end{equation}
\end{theorem}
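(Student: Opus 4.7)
The plan is to combine Lemma \ref{lthetao} (density of $\theta_{ij}$ at an outlier endpoint), the symmetry of that density about $\pi/2$, and a product/union-bound across all pairs involving an outlier.

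\textbf{Step 1 (per-pair tail).} First I would obtain a bound on $P(\phi_{ij}\leq\zeta)$ for a single pair with $i\in\mathcal{O}$. By Lemma \ref{lthetao}, $\theta_{ij}$ has density $h(\theta)=\frac{1}{\sqrt{\pi}}\frac{\Gamma(n/2)}{\Gamma((n-1)/2)}\sin^{n-2}\theta$ on $[0,\pi]$, which is symmetric about $\pi/2$. Since $\phi_{ij}=\min(\theta_{ij},\pi-\theta_{ij})$, the event $\{\phi_{ij}\leq\zeta\}$ splits into the two equiprobable halves $\{\theta_{ij}\leq\zeta\}$ and $\{\theta_{ij}\geq\pi-\zeta\}$, each of probability $\tilde p:=\int_{0}^{\zeta}h(\theta)\,d\theta$. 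Bounding with $\sin\theta\leq\theta$ and then applying the gamma recursion $\Gamma((n+1)/2)=\tfrac{n-1}{2}\Gamma((n-1)/2)$ yields the clean estimate
\[
\tilde p \;\leq\; \frac{1}{2\sqrt{\pi}}\cdot\frac{\Gamma(n/2)}{\Gamma((n+1)/2)}\,\zeta^{n-1}.
\]

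\textbf{Step 2 (union over pairs).} Next I would bound $P(q_{\mathcal{O}}\leq\zeta)$. This event equals $\{\exists\,(i,j),\,i\in\mathcal{O},\,j\neq i:\phi_{ij}\leq\zeta\}$. Splitting by the two tail sides of $\theta_{ij}$ and invoking the symmetry of $h$ gives $P(q_{\mathcal{O}}\leq\zeta)\leq 2\,P(\exists(i,j):\theta_{ij}\leq\zeta)$. There are at most $\binom{N}{2}\leq N^{2}/2$ relevant unordered pairs; treating the events $\{\theta_{ij}\leq\zeta\}$ as (approximately) independent across distinct pairs --- natural because the $\mathbf{x}_{k}$'s are sampled independently and uniformly --- gives $P(\exists(i,j):\theta_{ij}\leq\zeta)\leq 1-(1-\tilde p)^{N^{2}/2}$. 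Demanding the overall probability be at most $\alpha$ thus reduces to $(1-\tilde p)^{N^{2}/2}\geq 1-\alpha/2$.

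\textbf{Step 3 (solving for $\zeta$).} Taking logarithms of the last inequality and using $-\log(1-\tilde p)\approx\tilde p$ for small $\tilde p$ converts it into the sufficient condition $\tilde p\leq 2\log(1/(1-\alpha/2))/N^{2}$. Substituting the Step 1 estimate for $\tilde p$ and isolating $\zeta^{n-1}$ reproduces precisely the threshold displayed in (\ref{eth}).

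The main obstacle is justifying the product form $(1-\tilde p)^{N^{2}/2}$: angles that share an endpoint are not literally independent, so the argument either needs a negative-association style inequality or, more in the spirit of Lemma \ref{lthetao}, must exploit the fact that in high ambient dimension $n$ the relevant pairwise inner products on $\mathbb{S}^{n-1}$ decouple asymptotically. A secondary subtlety worth noting is the two-sided split that allocates $\alpha/2$ to each tail of $\theta_{ij}$; this is exactly what produces the $\log(1/(1-\alpha/2))$ factor and the coefficient $4$ in the final threshold, instead of the coarser $2\alpha$ that a one-sided union bound over pairs would yield.
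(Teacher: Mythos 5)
Your derivation reaches the stated threshold by a genuinely different route from the paper's. The paper never integrates the per-pair density: it invokes the extreme-value law of Cai et al.\ (Lemma \ref{lminangle}) for the minimum and maximum pairwise angle among $p$ uniform points, argues in Remark \ref{rfinite} that the asymptotic cdf dominates the finite-$p$ cdf, transfers this to outlier-incident angles via Remark \ref{rabouout}, and combines the two tails through part (c) of Lemma \ref{ltphi} to obtain $\mathbb{P}(q_\mathcal{O}>x)\geq 2e^{-KN^2x^{n-1}}-1$, which it then inverts. You instead rebuild the exponent from first principles: the per-pair tail $\tilde p\leq\frac{1}{2\sqrt{\pi}}\frac{\Gamma(n/2)}{\Gamma((n+1)/2)}\zeta^{n-1}$ (your $\sin\theta\leq\theta$ step plus the gamma recursion is exactly where the constant $K$ of (\ref{eK}) comes from), a Poisson-type product over the roughly $N^2/2$ pairs, and a two-sided split allocating $\alpha/2$ to each tail of $\theta_{ij}$. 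The bookkeeping is correct and lands on precisely (\ref{eth}), including the $4\sqrt{\pi}$ and the $\log(1/(1-\alpha/2))$.

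The one real gap is the one you name yourself: the product $(1-\tilde p)^{N^2/2}$ presumes independence across pairs, whereas angles sharing an endpoint are only pairwise independent. Two remarks. First, you can close this unconditionally with a plain union bound, $\mathbb{P}\bigl(\exists\,(i,j):\theta_{ij}\leq\zeta\bigr)\leq\binom{N}{2}\tilde p$, which with the stated $\zeta$ yields $\mathbb{P}(q_\mathcal{O}\leq\zeta)\leq 2\log\bigl(1/(1-\alpha/2)\bigr)=\alpha+O(\alpha^2)$; i.e., the theorem then holds rigorously at confidence $1-2\log(1/(1-\alpha/2))$ rather than $1-\alpha$, a loss of about $\alpha^2/4$ (roughly one percent of $\alpha$ at $\alpha=0.05$). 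Second, the paper does not actually do better on this point: its finite-$N$ step (Remark \ref{rfinite}) rests on monotonicity of $\theta_{min}^p$ in $p$, which by itself only gives the vacuous bound $\mathbb{P}(\theta_{min}^p\leq x)\leq 1$; the genuine control of the dependence lives inside the proof of the cited asymptotic law, not in the paper. So your route is more elementary and more transparent about where the approximation sits, and the union-bound patch above makes it fully rigorous up to the negligible $O(\alpha^2)$ degradation of the confidence level.
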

\begin{proof}
	We use Definition \ref{dqo} for $q_\mathcal{O}$. If $q_\mathcal{O}>\zeta$, then $q_i >\zeta$, $\forall i \in \mathcal{O}$. Hence we will work with $q_\mathcal{O} =\underset{i \in \mathcal{O}}{\min}\text{			} \underset{j=1, 2, ..N, j\neq i}{\min}\text{			}\phi_{ij}, $. Define, 
	\begin{equation}\label{eminangle}
	\theta_{min}^{\mathcal{O}} = \underset{i \in \mathcal{O}}{\min}\text{			} \underset{j=1, 2, ..N, j\neq i}{\min}\text{			}\theta_{ij}, 
	\end{equation} 
	which is the minimum principal angle that an outlier point makes. Further define, 
	\begin{equation}\label{emaxangle}
	\theta_{max}^{\mathcal{O}} = \underset{i \in \mathcal{O}}{\max}\text{			} \underset{j=1, 2, ..N, j\neq i}{\max}\text{			}\theta_{ij}, 
	\end{equation} 
	which is the maximum principal angle that an outlier point makes. 
	Note that $q_\mathcal{O}$ uses acute angles formed by a point for its computation. We will see, how using principal angles we arrive at a bound for $q_\mathcal{O}$.
	For this, we use the result given in Lemma \ref{ltphi} in appendix \ref{app3} that $\underset{i, j \in \mathcal{J}, i\neq j}{\min} \text{		}\phi_{ij} \leq_{st} \underset{i, j \in\mathcal{J}, i\neq j}{\min} \text{		}\theta_{ij}$\footnote{ where $\leq_{st}$ denote stochastic ordering between random variables. $Y \leq_{st} Z$, when $\mathbb{P}(Y>x) \leq \mathbb{P}(Z>x)$}, for any index set $\mathcal{J}$. From equation (\ref{eangle}) part (c), in Lemma \ref{ltphi},
	\begin{equation}\label{eprobnd}
	\begin{aligned}
	\mathbb{P}(q_\mathcal{O}>x)\geq \mathbb{P}(\theta_{min}^{\mathcal{O}} > x)+ \mathbb{P}(\pi - \theta_{max}^{\mathcal{O}}>x)-1 
	\end{aligned}	
	\end{equation}
	Therefore to obtain a lower bound on $q_\mathcal{O}$, one proceeds by evaluating $\mathbb{P}(\theta_{min}^{\mathcal{O}} > x)$ and $\mathbb{P}(\pi - \theta_{max}^{\mathcal{O}}>x)$. Note, \cite{cai2012phase} shows that the angles formed by uniformly chosen random points in space are only pairwise independent and not mutually independent and hence characterizing the exact distribution of $\theta_{min}^{\mathcal{O}}$ is very difficult. Hence we attempt to bound  $\mathbb{P}(\theta_{min}^{\mathcal{O}} > x)$ and $\mathbb{P}(\pi - \theta_{max}^{\mathcal{O}}>x)$.
	
	For a moment let us deviate from our problem setting and look at a setting where there are $p$ points chosen uniformly at random from $\mathbb{S}^{n-1}$. Lets denote this virtual setting as $\mathcal{V}_p$. Let $\theta_{min}^p$ denote the minimum pairwise principal angle amongst them i.e $\theta_{min}^p = \underset{1\leq i < j \leq p}{\min}\text{	}\theta_{ij}$	and let $\theta_{max}^p  = \underset{1\leq i < j \leq p}{\max}\text{	}\theta_{ij}$. The asymptotic distributions of $\theta_{min}^p$ and $\pi - \theta_{max}^p$ are given by Lemma \ref{lminangle} in appendix \ref{app1}. Applying this lemma, 
	\begin{equation}\label{ecdf}
	\begin{aligned}
	\mathbb{P}(\theta_{min}^p\leq x) &=  \mathbb{P}(\pi - \theta_{max}^p\leq x) = 1-e^{-K p^2 x^{n-1}}\\& \qquad \qquad\qquad\text{for $0 \leq x \leq \pi$}\text{, as } p \to \infty, 
	\end{aligned}
	\end{equation}
	where $K = \frac{1}{\sqrt{4\pi}}\frac{\Gamma(\frac{n}{2})}{\Gamma(\frac{n+1}{2})}$.
	We can state the following remark about the case when $p$ is finite.
	\begin{remark}\label{rfinite}
		For $p$ points chosen uniformly at random from $\mathbb{S}^{n-1}$, and for $\theta_{min}^p$ and $\theta_{max}^p$ as defined before, for $0\leq x\leq \pi$, 
		 \begin{equation}\label{ethtaminp}
		 \begin{aligned}
		 \mathbb{P}(\theta_{min}^p\leq x) &\leq 1-e^{-K p^2 x^{n-1}}\\
		 \mathbb{P}(\pi - \theta_{max}^p\leq x) &\leq 1-e^{-K p^2 x^{n-1}}
		 \end{aligned}
		\end{equation}
	\end{remark}
	This can be argued as follows. Lets start with $p$ points chosen uniformly at random from $\mathbb{S}^{n-1}$. Let the index set be denoted as $\mathcal{J}_p = \{1, 2, ..p\}$. Then we start sampling more and more points such that the number of points grows to infinity, i.e $p \to \infty$, lets denote the index set in such a situation as $\mathcal{J}_{\infty} = \{1, 2, ..p, ...\}$. Clearly $\mathcal{J}_{\infty} \supset \mathcal{J}_{p} $, $\Rightarrow \underset{i, j \in \mathcal{J}_\infty, i\neq j}{\min}\text{	} \theta_{ij} \leq \underset{i, j \in \mathcal{J}_p, i\neq j}{\min}\text{	} \theta_{ij}$. In other words whenever, $\underset{i, j \in \mathcal{J}_p, i\neq j}{\min}\text{	} \theta_{ij}\leq x$, then $\underset{i, j \in \mathcal{J}_\infty, i\neq j}{\min}\text{	} \theta_{ij} \leq x$, while the reverse is not true. Hence,
	\begin{align*}
	\mathbb{P}(\underset{i, j \in \mathcal{J}_p, i\neq j}{\min}\text{	} \theta_{ij} \leq x) &\leq \mathbb{P}(\underset{i, j \in \mathcal{J}_\infty, i\neq j}{\min}\text{	} \theta_{ij} \leq x)
	\end{align*}
	The LHS from above is $\mathbb{P}(\theta_{min}^p\leq x)$ and the RHS in nothing but $\mathbb{P}(\theta_{min}^p\leq x)$ as $p \to \infty$. Hence using (\ref{ecdf}), for $0 \leq x \leq \pi$, 
	 \begin{equation*}
	 \mathbb{P}(\theta_{min}^p\leq x) \leq 1-e^{-K p^2 x^{n-1}}\hskip30pt \text{for finite $p$}.
	 \end{equation*}
	  The same argument can be extended to $\pi - \theta_{max}^p$, to obtain
	  	\begin{equation*}
	  \mathbb{P}(\pi - \theta_{max}^p\leq x) \leq 1-e^{-K p^2 x^{n-1}}\hskip15pt \text{for finite $p$}.
	  \end{equation*}  
	  Let $\mathcal{O}_p \subseteq \{1, 2, ...p\}$ be an index set, and let
	  \begin{equation}\label{eminovirtual}
	  \begin{aligned}
	  \theta_{min}^{\mathcal{O}_p} &= \underset{i \in \mathcal{O}_p}{\min}\text{			}\underset{j \in \{1, 2...p\}, j \neq i}{\min}\text{			}\theta_{ij} \\ 
	  \theta_{max}^{\mathcal{O}_p} &= \underset{i \in \mathcal{O}_p}{\max}\text{			}\underset{j \in \{1, 2...p\}, j \neq i}{\max}\text{			}\theta_{ij}
	  \end{aligned}
	  \end{equation}
	   We know that $\theta_{min}^p$ may be written as
	  \begin{equation*}
	  \theta_{min}^p = \underset{i \in \{1, 2...p\}}{\min}\text{			}\underset{j \in \{1, 2...p\}, j\neq i}{\min}\text{			}\theta_{ij}
	  \end{equation*}
	  Since $\mathcal{O}_p \subset \{1, 2...p\}$, $\theta_{min}^{p} \leq \theta_{min}^{\mathcal{O}_p} $. In other words, $\theta_{min}^{\mathcal{O}_p} \leq x$ $\Rightarrow \theta_{min}^p \leq x$, for $0 \leq x \leq \pi$, while the reverse is not true. This is logical as the minimum of a set is always lesser than the minimum of any subset. Hence 
	  \begin{align*}
	  \mathbb{P}(\theta_{min}^{\mathcal{O}_p} \leq x) &\leq \mathbb{P}(\theta_{min}^p \leq x)\hskip20pt\text{$0 \leq x \leq \pi$.}\\
	  \Rightarrow \mathbb{P}(\theta_{min}^{\mathcal{O}_p} \leq x) & \leq 1-e^{-K p^2 x^{n-1}} \hskip20pt\text{from (\ref{ethtaminp})}
	  \end{align*}
	  The same arguments can be extended to $\pi - \theta_{max}^{\mathcal{O}_p}$ as well. In this case, $\theta_{max}^{\mathcal{O}_p} \geq x$ $\Rightarrow \theta_{max}^p \geq x$ and hence for $0 \leq x \leq \pi$, $\pi-\theta_{max}^{\mathcal{O}_p} \leq x$ $\Rightarrow \pi-\theta_{max}^p \leq x$ and the rest follows. Hence we have for a virtual setting $\mathcal{V}_p$ for any $\mathcal{O}_p \subseteq \{1, 2..p\}$, and $\theta_{min}^{\mathcal{O}_p}$ and $\theta_{max}^{\mathcal{O}_p}$ defined as in (\ref{eminovirtual}), 
	  \begin{equation}\label{emainvirtual}
	  \begin{aligned}
	  \mathbb{P}(\theta_{min}^{\mathcal{O}_p} \leq x) & \leq 1-e^{-K p^2 x^{n-1}}\hskip20pt\text{$0 \leq x \leq \pi$}\\
	  \mathbb{P}(\pi - \theta_{max}^{\mathcal{O}_p} \leq x) & \leq 1-e^{-K p^2 x^{n-1}}\hskip20pt\text{$0 \leq x \leq \pi$.}\\
	  \end{aligned}
	  \end{equation}
	  
	Now let us look at our problem setting, where the outlier points are sampled uniformly at random from $\mathbb{S}^{n-1}$ and the other portion of $N$ points, the inliers are uniformly sampled at random from the intersection of a lower dimensional subspace $\mathcal{U}$ and $\mathbb{S}^{n-1}$ which makes it different from a virtual setting with $N$ points in $\mathbb{S}^{n-1}$, i.e $\mathcal{V}_N$\footnote{Here we denote the number of points in the virtual setting as $N$ instead of $p$, hence $\mathcal{V}_N.$}. We state the following remark.
	\begin{remark}\label{rabouout}
		Under Assumption \ref{amain}, an angle that an outlier point make with any other point have the same statistical properties in the inlier-outlier setting and $\mathcal{V}_N$.
	\end{remark}
	 This can be argued as follows. Using Assumption 1, we know that the inlier points come from a randomly chosen subspace $\mathcal{U}$ and also they are sampled uniformly at random in the subspace and so the angle that an outlier point makes with the inlier points have the same statistical behavior as the angle made with a point uniformly sampled at random from $\mathbb{S}^{n-1}$. Angle that an outlier point make with another outlier already conforms with $\mathcal{V}_N$. Hence the remark is true. Note that this is not true for the angles formed by any two inlier points.

	Therefore result (\ref{emainvirtual}) can be directly applied to $\theta_{min}^{\mathcal{O}}$ and $\theta_{max}^{\mathcal{O}}$, i.e
	\begin{equation}\label{eominfinal}
	\begin{aligned}
	\mathbb{P}(\theta_{min}^{\mathcal{O}} \leq x) &\leq  1-e^{-K N^2 x^{n-1}}\\
	\Rightarrow \mathbb{P}(\theta_{min}^{\mathcal{O}} > x) &\geq e^{-K N^2 x^{n-1}}\\ 
	\text{and	}\mathbb{P}(\pi - \theta_{max}^{\mathcal{O}} > x) &\geq e^{-K N^2 x^{n-1}}
	\end{aligned}
	\end{equation}
	Applying result (\ref{eominfinal}) in (\ref{eprobnd}), we obtain
	\begin{align*}
	\mathbb{P}(q_\mathcal{O}>x)\geq 2e^{-K N^2 x^{n-1}}-1 
	\end{align*}
	
	We want a lower bound $\zeta$ such that $q_\mathcal{O} > \zeta$ with probability at least $1-\alpha$. From the above equation, setting  $2e^{-K N^2x^{n-1}} -1 = 1-\alpha$ and using $\zeta = x$, 
	\begin{equation*}
	\begin{aligned}
	2e^{-K\zeta^{n-1}N^2}-1 &= 1-\alpha
	\Rightarrow  e^{-K N^2 \zeta^{n-1}} = 1- \frac{\alpha}{2}\\
	\Rightarrow K \zeta^{n-1} N^2 &= log(\frac{1}{1- \frac{\alpha}{2}})\\
	\end{aligned}
	\end{equation*}
	Using the value for $K$ from (\ref{eK}), we arrive at the threshold as in equation (\ref{eth})
	\begin{align*}
	\zeta &=  \Big[\dfrac{4\sqrt{\pi} \times\Gamma(\frac{n+1}{2})log(\frac{1}{1-\frac{\alpha}{2}})}{N^2\Gamma(\frac{n}{2})}\Big]^{\frac{1}{n-1}}
	\end{align*}
	Hence $q_\mathcal{O}> \zeta$ with probability at least $1-\alpha$. 
\end{proof}
  Summarizing, we have derived a high probability bound for $q_i$, $i \in \mathcal{O}$, which does not depend on the unknown parameters $\gamma$ and $r$. We have already seen how $\zeta$ is used in Algorithm 1. 
\begin{corollary}\label{cguara}
	The outlier index set estimate output of ROMA, $\hat{\mathcal{O}}$ contains $\mathcal{O}$ with probability at least $1-\alpha$, i.e 
	\begin{equation}
	\mathcal{O} \subseteq \hat{\mathcal{O}} \hskip30pt w.p \geq 1-\alpha
	\end{equation}
\end{corollary}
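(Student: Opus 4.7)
The plan is to derive the corollary almost immediately from Theorem \ref{tthr} by reading off what $\hat{\mathcal{O}}$ means in terms of the threshold $\zeta$. Two pieces are required: a deterministic structural fact about the output of Algorithm 1, and a direct application of the high-probability bound already proved in Theorem \ref{tthr}.

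First I would unpack the thresholding step of the algorithm. The sorted scores satisfy $q_l \geq q_{l+1}$ and $l^{*} = \max\{l : q_l > \zeta\}$, so once the sorted sequence crosses below $\zeta$ it stays there. Consequently the indices declared outliers, $\hat{\mathcal{O}} = L(1:l^{*})$, are precisely those original indices $i$ whose score exceeds $\zeta$. This gives the deterministic identity $\hat{\mathcal{O}} = \{i \in \{1, \ldots, N\} : q_i > \zeta\}$, which holds irrespective of any probabilistic assumption.

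Next I would invoke Theorem \ref{tthr}, which guarantees that $q_i > \zeta$ for every $i \in \mathcal{O}$ with probability at least $1-\alpha$. On that event, every outlier index $i$ satisfies $q_i > \zeta$, and by the identity from the previous step this forces $i \in \hat{\mathcal{O}}$. Therefore $\mathcal{O} \subseteq \hat{\mathcal{O}}$ on an event of probability at least $1-\alpha$, which is exactly the statement of the corollary.

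There is essentially no obstacle beyond bookkeeping. The one detail worth a sentence is that $l^{*}$ must be well defined for the algorithmic output to make sense: on the event from Theorem \ref{tthr}, and assuming $|\mathcal{O}| \geq 1$, at least one score exceeds $\zeta$, so the argmax set is nonempty and $l^{*}$ exists. No new probabilistic estimate is required, since all the work has already been absorbed into Theorem \ref{tthr}.
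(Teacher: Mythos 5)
Your proposal is correct and follows essentially the same route as the paper: both arguments reduce the corollary to a direct application of Theorem \ref{tthr} together with the observation that the sorted-threshold step makes $\hat{\mathcal{O}}$ exactly the set of indices with $q_i > \zeta$ (the paper phrases this via the complement, $\hat{\mathcal{I}} \subset \mathcal{I}$, but the content is identical). Your explicit remark about $l^*$ being well defined is a small bookkeeping point the paper omits, but it does not change the argument.
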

\begin{proof}
	The proof is a simple application of Theorem \ref{tthr}. After ordering $q_i$ to the ordered scores $q_l$, the algorithm chooses $l^*$ such that $l^*$ is the last point when the scores are above $\zeta$. Hence with probability $1-\alpha$, the sorted indices beyond $l^*$ are not outlier points. Hence, if $L$ denotes the sorted indices, $\hat{\mathcal{I}} = L(l^*+1:N) \subset \mathcal{I}$ with probability $1-\alpha$  or the respective complements, $\mathcal{O} \subseteq \hat{\mathcal{O}}$ with the same probability.
\end{proof}
We have given the guarantee that $\mathcal{O} \subseteq \hat{\mathcal{O}}$ with high probability. The next point of interest would be to see when the identification is exact, i.e $\mathcal{O} = \hat{\mathcal{O}}$ and $\hat{\mathcal{I}} = \mathcal{I}$. 
\subsection{Algorithm properties and guarantees for exact recovery}\label{sguaran}
In this section, we will be looking at the properties of algorithm for exactly recovering the true inlier set. We will state two remarks on ROMA and the properties given in Definitions \ref{doip} and \ref{derp}. 
\begin{remark}\label{roip}
	 When $\zeta$ is as given by (\ref{eth}), ROMA has OIP($\alpha$). And this is regardless of the number of outliers in the system or the dimension of the underlying subspace. 
\end{remark}
\begin{remark}\label{rnoerp}
	The algorithm ROMA has ERP($\alpha$) whenever $q_\mathcal{I} \leq \zeta$ with probability at least $1-\alpha$ and it does not have ERP($\alpha$) whenever $q_\mathcal{I} >\zeta$ with probability at least $\alpha$.
\end{remark}
	When $q_\mathcal{I} \leq \zeta$ with probability at least $1-\alpha$, then $\hat{\mathcal{I}} = \mathcal{I}$ with a probability at least $1-\alpha $ and so the algorithm will have ERP($\alpha$). 
Next, we provide conditions on $n$ and $r$ that ensures that on an average, the inlier and outlier scores are well separated. When these conditions are satisfied, the algorithm recovers a good percentage of inliers, so that the subspace can be recovered efficiently. The smaller the rank of the true subspace, the better the results will be in terms on inlier recovery.
\begin{lemma}\label{lrough}
	$q_i$, $i\in\mathcal{O}$ and $q_i$, $i \in \mathcal{I}$ are well separated when, $r < 2 + \dfrac{2(n-2)}{9\pi}$
	\begin{proof}
		Please refer to appendix \ref{app2}
	\end{proof}
\end{lemma}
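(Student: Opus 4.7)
The plan is to leverage the Gaussian approximations from Lemmas \ref{lthetao} and \ref{lthetai} to compare the typical sizes of $q_i^{\mathcal{O}}$ and $q_i^{\mathcal{I}}$. For an outlier index $i$, every angle $\theta_{ij}$ (with $j \neq i$) is approximately $\mathcal{N}(\pi/2, 1/(n-2))$, so the acute angle $\phi_{ij}=\pi/2-|\theta_{ij}-\pi/2|$ concentrates just below $\pi/2$ on the scale $1/\sqrt{n-2}$. For an inlier index $i$, the $N_{\mathcal{I}}-1$ inlier-inlier angles are approximately $\mathcal{N}(\pi/2, 1/(r-2))$ with the much broader scale $1/\sqrt{r-2}$ whenever $r\ll n$, while the $\gamma N$ inlier-outlier angles retain the narrow scale $1/\sqrt{n-2}$. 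Thus the minimum defining $q_i^{\mathcal{I}}$ is driven by the wide inlier-inlier block, whereas $q_i^{\mathcal{O}}$ is controlled entirely by the narrow block.

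First I would quantify the expected gap $\mathbb{E}[\pi/2-q_i]$ in each regime. Rewriting $q_i=\pi/2-\max_{j\neq i}|\theta_{ij}-\pi/2|$ and using the half-normal first-moment identity $\mathbb{E}[|X|]=\sigma\sqrt{2/\pi}$ for $X\sim\mathcal{N}(0,\sigma^2)$ as a first-order proxy for the scale of this maximum, the typical outlier gap is of order $\sqrt{2/(\pi(n-2))}$ and the typical inlier gap is of order $\sqrt{2/(\pi(r-2))}$. Encoding \emph{well separated on average} as a three-standard-deviation-type condition of the inlier gap against the folded-Gaussian standard deviation of the outlier acute angle then produces a variance-ratio inequality of the shape $\sigma_{\mathcal{I}}^{2}/\sigma_{\mathcal{O}}^{2} = (n-2)/(r-2) \geq 9\pi/2$, which rearranges directly into the claimed bound $r<2+\tfrac{2(n-2)}{9\pi}$. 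The factor $\pi$ in the final inequality enters precisely through the half-normal constant $\sqrt{2/\pi}$ introduced at the moment-comparison step.

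The main obstacle is that $q_i$ is a minimum of $N-1$ angles that are only pairwise (not mutually) independent, as already flagged inside the proof of Theorem \ref{tthr}, so no tractable closed form for the exact extreme-value distribution is available. The proof sidesteps this by working with first-moment Gaussian proxies rather than tail estimates, which is entirely consistent with the \emph{rough} qualifier in the lemma's statement. Once the notion of separation is fixed, the remainder is mechanical algebra: substitute the two scales $1/\sqrt{n-2}$ and $1/\sqrt{r-2}$, simplify the constants arising from $\sqrt{2/\pi}$ and $\sqrt{1-2/\pi}$, and solve the resulting inequality for $r$.
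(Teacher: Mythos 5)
Your proposal follows essentially the same route as the paper: the paper likewise uses the folded-Gaussian first moment (Lemma \ref{l3}/Corollary \ref{c2}) to write $\mathbb{E}(q_i)\leq \pi/2-\sqrt{2/(\pi(r-2))}$ for inliers (via Jensen) and the $c=3$ high-probability bound $\phi_{ij}>\pi/2-3/\sqrt{n-2}$ for outliers, then demands the former fall below the latter, which yields exactly your ratio $(n-2)/(r-2)\geq 9\pi/2$ and hence $r<2+\tfrac{2(n-2)}{9\pi}$. The only cosmetic difference is your phrasing of the outlier side as a ``folded-Gaussian standard deviation'' where the paper uses $3\sigma$ of the unfolded angle; the arithmetic you report matches the paper's.
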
	

In the next theorem we will derive the theoretical condition when ROMA is guaranteed not to have ERP($\alpha$).
\begin{theorem}\label{trevcond}
	The algorithm ROMA is guaranteed not to have ERP($\alpha$), when $N_\mathcal{I} < 1+\dfrac{1-\alpha}{F_{\phi^{\mathcal{I}}}(\zeta)} - \dfrac{\gamma N F_{\phi^{\mathcal{O}}}(\zeta)}{F_{\phi^{\mathcal{I}}}(\zeta)}  $, where $F_{\phi^{\mathcal{I}}}(.)$ is the cdf function of $\phi_{ij}$, when $i, j \in \mathcal{I}$ and $F_{\phi^{\mathcal{O}}}(.)$ is the cdf function of $\phi_{ij}$, when $i \in \mathcal{I}, j \in \mathcal{O}$
\end{theorem}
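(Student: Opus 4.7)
The plan is to turn the hypothesis on $N_\mathcal{I}$ into a lower bound on $\mathbb{P}(q_\mathcal{I}>\zeta)$ exceeding $\alpha$, and then invoke Remark \ref{rnoerp} to conclude that ROMA is guaranteed not to have ERP($\alpha$). The strategy is to control just a \emph{single} inlier's score and use the trivial observation that $q_\mathcal{I}=\max_{i\in\mathcal{I}} q_i$ dominates each $q_i$.

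First I would pick an arbitrary $i\in\mathcal{I}$ and note the deterministic inequality $q_\mathcal{I}\ge q_i$, which gives $\mathbb{P}(q_\mathcal{I}>\zeta)\ge \mathbb{P}(q_i>\zeta)$. Next I would rewrite the complementary event as a union,
\[
\{q_i\le\zeta\} \;=\; \bigcup_{j\neq i}\{\phi_{ij}\le \zeta\},
\]
and apply Boole's inequality. The essential point is that the union bound requires no independence whatsoever, which is what allows us to bypass the difficulty highlighted earlier via \cite{cai2012phase} that the pairwise angles are only pairwise independent.

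Second, I would split the sum over $j\neq i$ according to whether $j\in\mathcal{I}\setminus\{i\}$ or $j\in\mathcal{O}$. Under Assumption \ref{amain}, angles between pairs of inliers share the common cdf $F_{\phi^\mathcal{I}}$ and angles between an inlier and an outlier share the common cdf $F_{\phi^\mathcal{O}}$ (the latter being consistent with Lemma \ref{lthetao} and the rotational invariance of the uniform law on $\mathbb{S}^{n-1}$), so the bound becomes
\[
\mathbb{P}(q_i\le \zeta)\;\le\;(N_\mathcal{I}-1)\,F_{\phi^\mathcal{I}}(\zeta)\;+\;\gamma N\,F_{\phi^\mathcal{O}}(\zeta).
\]
A direct algebraic rearrangement of the hypothesis $N_\mathcal{I}<1+\frac{1-\alpha}{F_{\phi^\mathcal{I}}(\zeta)}-\frac{\gamma N\,F_{\phi^\mathcal{O}}(\zeta)}{F_{\phi^\mathcal{I}}(\zeta)}$ yields exactly $(N_\mathcal{I}-1)F_{\phi^\mathcal{I}}(\zeta)+\gamma N\,F_{\phi^\mathcal{O}}(\zeta)<1-\alpha$. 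Hence $\mathbb{P}(q_i>\zeta)>\alpha$, so $\mathbb{P}(q_\mathcal{I}>\zeta)>\alpha$, and Remark \ref{rnoerp} closes the argument.

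The hard part is conceptual rather than computational: once one realises that it suffices to study any single inlier (so that the maximum is implicitly absorbed), the proof reduces to a one-line union bound plus a rearrangement. The only technical care needed is the marginal identification of $F_{\phi^\mathcal{O}}$ for inlier-outlier pairs, which follows from the rotational invariance used implicitly throughout Section \ref{s3}.
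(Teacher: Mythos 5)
Your proposal is correct and follows essentially the same route as the paper: bound $\mathbb{P}(q_\mathcal{I}\le\zeta)$ by the probability for a single inlier, apply the union bound over $j\in\mathcal{I}\setminus\{i\}$ and $j\in\mathcal{O}$ separately, and rearrange $(N_\mathcal{I}-1)F_{\phi^{\mathcal{I}}}(\zeta)+\gamma N F_{\phi^{\mathcal{O}}}(\zeta)<1-\alpha$ into the stated condition. The only cosmetic difference is that you phrase the conclusion through the complementary event $\mathbb{P}(q_\mathcal{I}>\zeta)>\alpha$ and Remark \ref{rnoerp}, while the paper works directly with $\mathbb{P}(q_\mathcal{I}\le\zeta)<1-\alpha$; these are equivalent.
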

\begin{proof}
	Please refer to appendix \ref{app2}
\end{proof}
This theorem gives us conditions when the algorithm is guaranteed to not have ERP($\alpha$). In this case the outlier index estimate, $\hat{\mathcal{O}} \supset \mathcal{O}$ and the points classified as outliers is sure to have had inliers as well, i.e $\hat{\mathcal{O}} \cap \mathcal{I} \neq \Phi$. This usually occurs in cases when the outlier fraction is high. Lets fix the alpha value as $\alpha =0.05$. As an example, let us take the case when $n=60, r=10$ and suppose we have $N=400$ samples, then if we have less than $60$ inlier samples, the algorithm is guaranteed to not have ERP($\alpha$). We will take another example where $n>>r$, say $n=100$ and $r=6$ with the same number of samples $N=400$. In this case only when the number of inlier samples goes below $12$ that the algorithm is guaranteed to not have ERP($\alpha$). In all these cases the algorithm still has OIP($\alpha$). All the prior art also derives similar conditions on performance guarantee, for instance coherence pursuit\cite{rahmani2016coherence} guarantees subspace recovery when the inlier density $\frac{N_\mathcal{I}}{r}$ is sufficiently larger that the outlier density. i.e $\frac{N-N_\mathcal{I}}{n}$ while outlier pursuit \cite{xu2010robust} gives conditions on $\gamma$ and $r$ for successful subspace recovery. The theorem does not state the conditions in which the algorithm is guaranteed to have ERP($\alpha$), it merely gives us extreme cases where it is not. The following lemma should give us an idea about the ERP($\alpha$) of ROMA.
\begin{lemma}\label{lerpb}
	Let $N_{\mathcal{I}}=(1-\gamma)N$ denote the number of inliers and let $q_\mathcal{I}$ be given as in Definition \ref{dqi}. Then we have for $k \in \mathcal{I}$, $\mathbb{P}(q_\mathcal{I} \leq \zeta) \geq 1- N_{\mathcal{I}}\mathbb{P}({\underset{j\in \{1, 2..N\}, j\neq k}{\min} \theta_{kj} > \zeta})$. Hence the algorithm has ERP($N_{\mathcal{I}}\mathbb{P}({\underset{j\in \{1, 2..N\}, j\neq k}{\min} \theta_{kj} > \zeta})$).
\end{lemma}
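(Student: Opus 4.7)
The plan is to prove the lower bound on $\mathbb{P}(q_\mathcal{I}\leq\zeta)$ by a simple union bound over inliers, passed through the elementary relation $\phi_{ij}\le\theta_{ij}$. First I would rewrite the target event in terms of individual inlier scores: by Definition \ref{dqi},
\begin{equation*}
\{q_\mathcal{I}\leq\zeta\}=\bigcap_{k\in\mathcal{I}}\{q_k\leq\zeta\},\qquad \{q_\mathcal{I}>\zeta\}=\bigcup_{k\in\mathcal{I}}\{q_k>\zeta\}.
\end{equation*}
Applying the union bound to the complement yields $\mathbb{P}(q_\mathcal{I}>\zeta)\leq\sum_{k\in\mathcal{I}}\mathbb{P}(q_k>\zeta)$.

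Next I would pass from the acute angle $\phi_{kj}$ to the principal angle $\theta_{kj}$. Since $\phi_{kj}=\min(\theta_{kj},\pi-\theta_{kj})\leq\theta_{kj}$ for every pair, taking the minimum over $j\neq k$ preserves the inequality, so
\begin{equation*}
q_k=\min_{j\neq k}\phi_{kj}\;\leq\;\min_{j\neq k}\theta_{kj},
\end{equation*}
which implies the event inclusion $\{q_k>\zeta\}\subseteq\{\min_{j\neq k}\theta_{kj}>\zeta\}$, hence $\mathbb{P}(q_k>\zeta)\leq\mathbb{P}(\min_{j\neq k}\theta_{kj}>\zeta)$. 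By Assumption \ref{amain} the inlier points are exchangeable (sampled i.i.d.\ uniformly from $\mathcal{U}\cap\mathbb{S}^{n-1}$), so for any $k\in\mathcal{I}$ the distribution of $\min_{j\neq k}\theta_{kj}$ is the same; call this common probability $p_\zeta:=\mathbb{P}(\min_{j\neq k}\theta_{kj}>\zeta)$.

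Combining the two steps gives $\mathbb{P}(q_\mathcal{I}>\zeta)\leq N_\mathcal{I}\,p_\zeta$, whence
\begin{equation*}
\mathbb{P}(q_\mathcal{I}\leq\zeta)\;\geq\;1-N_\mathcal{I}\,\mathbb{P}\bigl(\min_{j\neq k}\theta_{kj}>\zeta\bigr),
\end{equation*}
which is the asserted inequality. The ERP conclusion then follows immediately from Remark \ref{rnoerp}: setting $\alpha=N_\mathcal{I}\,p_\zeta$ makes $\mathbb{P}(q_\mathcal{I}\leq\zeta)\geq 1-\alpha$, and so ROMA has ERP($N_\mathcal{I}\,p_\zeta$).

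There is essentially no obstacle here; the only care needed is to use the pointwise inequality $\phi_{kj}\leq\theta_{kj}$ in the right direction (so that the inclusion of events runs the correct way), and to justify that the $N_\mathcal{I}$ terms in the union bound share a common value $p_\zeta$ via the exchangeability granted by Assumption \ref{amain}. A looser (but still valid) statement would simply retain the sum $\sum_{k\in\mathcal{I}}\mathbb{P}(\min_{j\neq k}\theta_{kj}>\zeta)$ without invoking identical distribution, but the cleaner form $N_\mathcal{I}\,p_\zeta$ is what the lemma states.
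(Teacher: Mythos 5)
Your proposal is correct and follows essentially the same route as the paper's proof: a union bound over the inlier events $\{q_k>\zeta\}$, identical distribution of the per-inlier minima under Assumption \ref{amain}, and then passing from $\phi_{kj}$ to $\theta_{kj}$. The only cosmetic difference is that you justify $\mathbb{P}(\min_{j\neq k}\phi_{kj}>\zeta)\leq\mathbb{P}(\min_{j\neq k}\theta_{kj}>\zeta)$ by the pointwise inequality $\phi_{kj}\leq\theta_{kj}$ (which is arguably cleaner), whereas the paper cites Lemma \ref{ltphi} for the same fact.
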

\begin{proof}
	Please refer appendix \ref{app2}  
\end{proof}
For proceeding further it is required to characterize the complementary cdf (ccdf) of $\underset{j\in \{1, 2..N\}, j\neq k}{\min} \theta_{kj} $ for $k \in \mathcal{I}$. Since the $\theta_{kj}$'s are only pairwise independent and not mutually independent as noted in \cite{cai2013distributions} and \cite{cai2012phase}, finding this ccdf analytically is mathematically very difficult. However one can find $\mathbb{P}({\underset{j\in \{1, 2..N\}, j\neq k}{\min} \theta_{kj} > \zeta})$ empirically through simulations to obtain more insight about inlier recovery properties of ROMA. 
\subsection{Impact of noise on the algorithm}
\begin{remark}
	The algorithm ROMA, retains OIP($\alpha$) even in presence of Gaussian noise irrespective of noise variance. 
\end{remark}   
When Gaussian noise is added to an outlier data point, and the noisy outlier is normalized, it is just like selecting it at random from an $n$ dimensional hypersphere. Hence all the theory and bounds on the outlier score will not change. Noise will however affect the inlier identification of the algorithm as noise is bound to increase the statistic $q_\mathcal{I}$. This means that more inlier scores would be above the threshold and so would be classified as outliers. So in presence of noise, in order for the algorithm to have ERP($\alpha$), it would require more number of samples compared to the number that would have been needed had there been no noise, as expected. Mathematically quantifying the impact of noise on subspace recovery for this algorithm will be a part of future research.
\subsection{Parameter free - What about $\alpha$?}\label{salp}
The algorithm ROMA is parameter free since it does not require any knowledge of $r$ or $\gamma$ for its working. Note that $\alpha$ is merely a term that signifies the probability of success of identification of outliers. When $\alpha$ is made smaller, OIP is satisfied by the algorithm with increasing probability but at the same time the value of the threshold $\zeta$ reduces. This in turn means more inliers could be classified as outliers, which impacts the inlier recovery property of the algorithm. Note $\alpha$ neither depend on data statistics and nor needs to be set by cross validation. If one knows that an application is very sensitive to outliers, $\alpha$ can be set to a very low value. Setting $\alpha = 0.05$ would ensure high success of identification of outliers and a reasonable range on the number of inliers where the algorithm has ERP($\alpha$). Unless otherwise specified, we have set $\alpha = 0.05$ in all our simulations.
\begin{figure*}[h]
	\begin{subfigure}[b]{0.24\textwidth}
		\includegraphics[width=\linewidth, height=6cm]{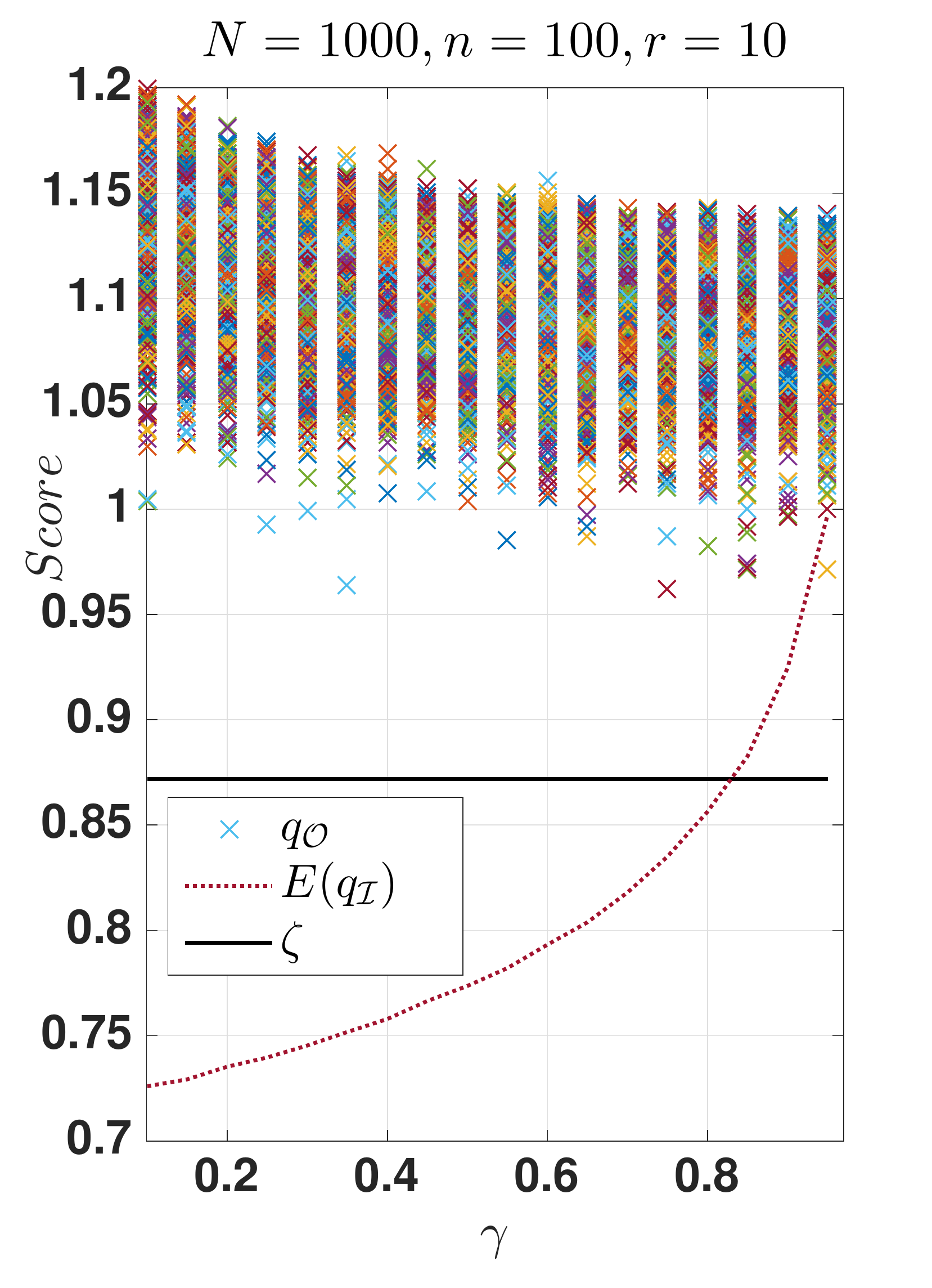}
		\caption{At higher $N$}
		\label{fvalid1}
	\end{subfigure}
	\hfill
	\begin{subfigure}[b]{0.24\textwidth}
		\includegraphics[width=\linewidth, height=6cm]{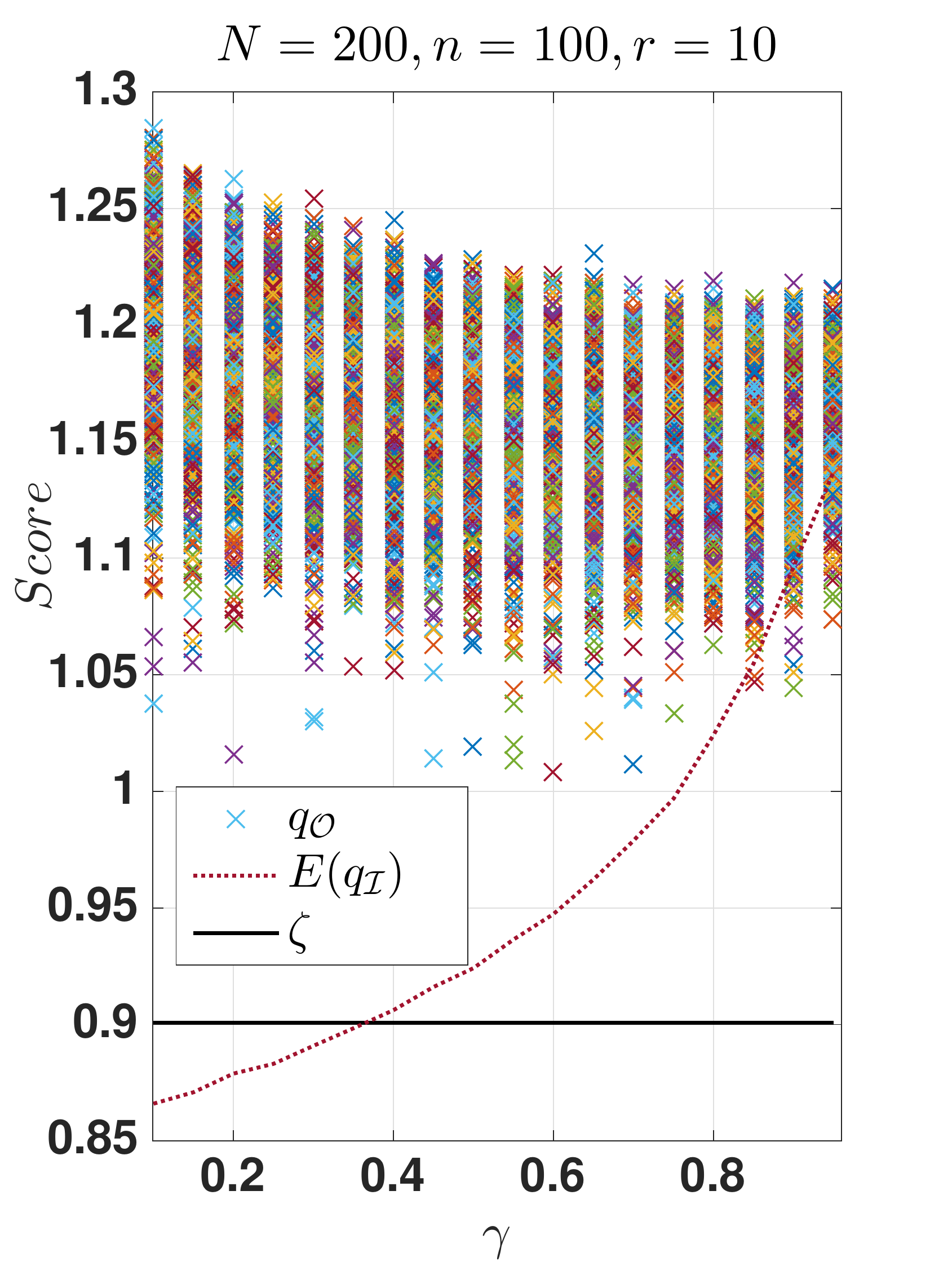}
		\caption{At a lower $N$}
		\label{fvalid2}
	\end{subfigure}
	\hfill
	\begin{subfigure}[b]{0.24\textwidth}
		\includegraphics[width=\linewidth, height=6cm]{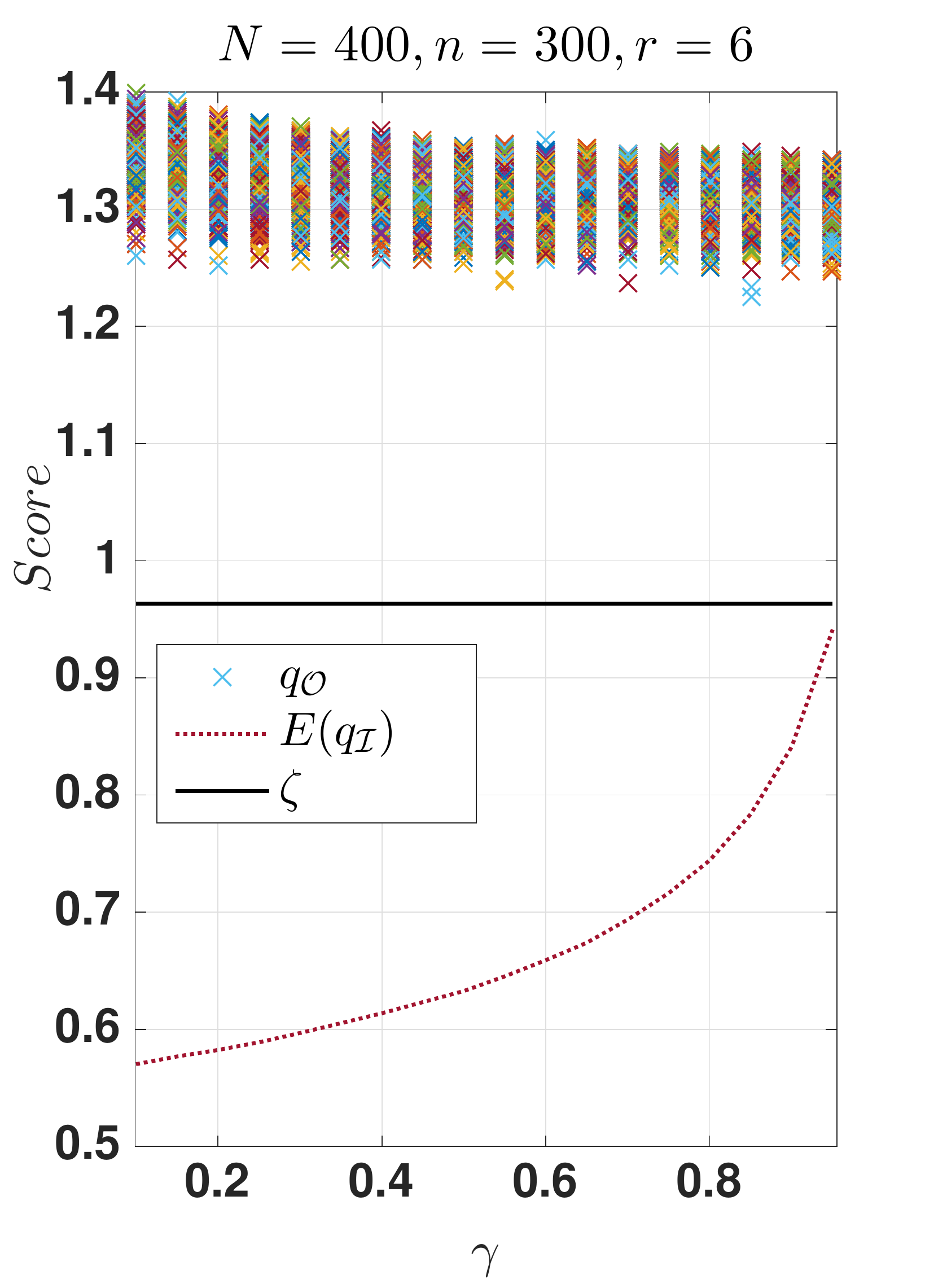}
		\caption{At low $\frac{r}{n}$}
		\label{fvalid3}
	\end{subfigure}
	\hfill
	\begin{subfigure}[b]{0.24\textwidth}
		\includegraphics[width=\linewidth, height=6cm]{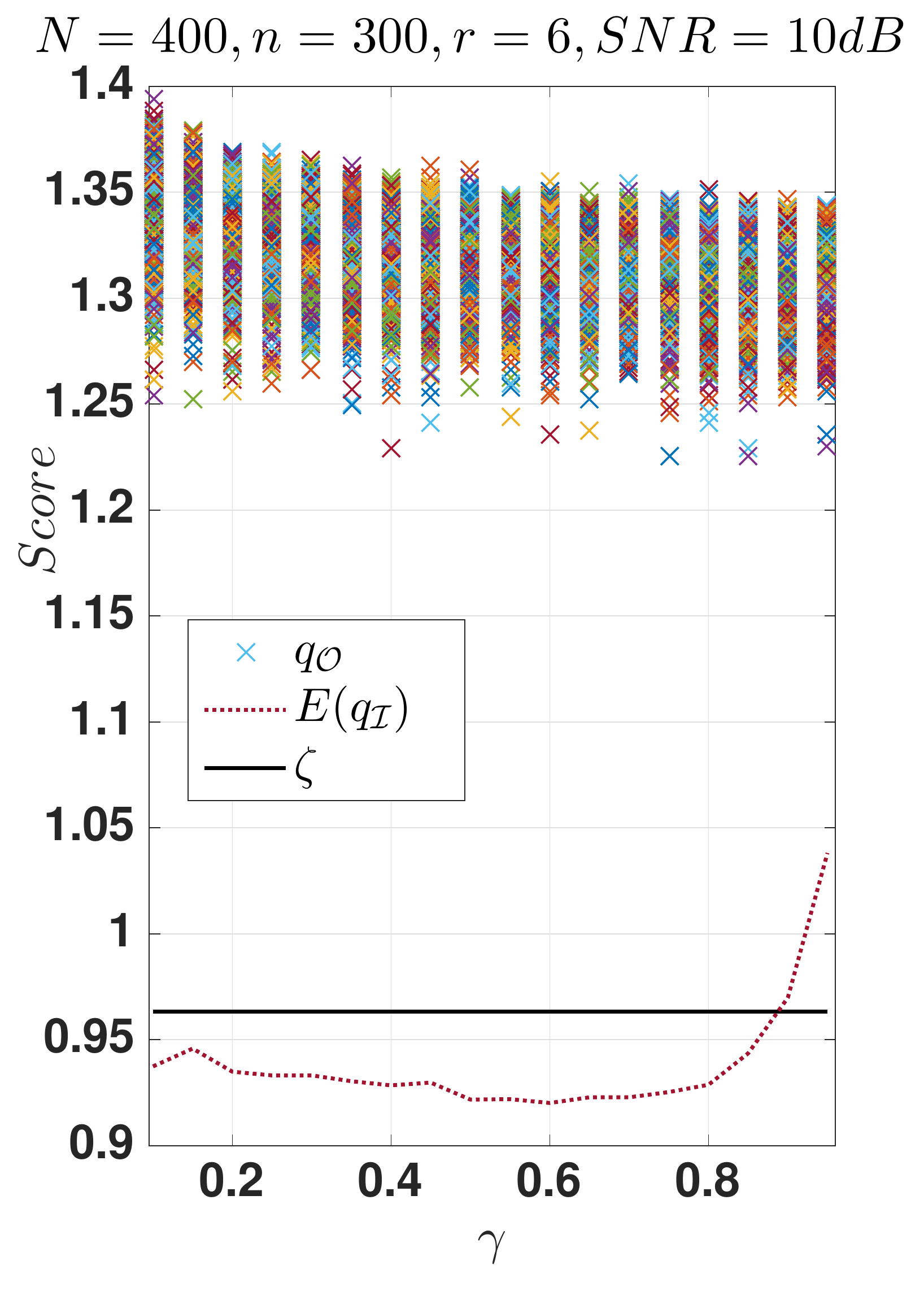}
		\caption{In Gaussian noise}
		\label{fvalidn}
	\end{subfigure}
	\hfill
	\caption{Validating $\zeta$ for different cases of $N, n, r$ over a range of outlier fraction $\gamma$}
	\label{fvalid}
\end{figure*} 
\section{Numerical Simulations}\label{snumsim}
In this section, we present the simulation results of the proposed method. Here we demonstrate the properties of the proposed algorithm in terms of inlier identification and subspace recovery on synthetic and real data. We also compare our method with some of the existing algorithms for robust PCA, in terms of running time of the algorithm and log recovery error ($LRE$) of the estimated subspace, which is defined as in \cite{rahmani2016coherence}, i.e, 
\begin{equation}\label{elre}
LRE =  log_{10}(\dfrac{\|\textbf{U} - \hat{\textbf{U}}\hat{\textbf{U}}^T\textbf{U}\|_F}{\|\textbf{U}\|_F}), 
\end{equation}
where $\textbf{U}$ is basis of the true inlier subspace and $\hat{\textbf{U}}$ is the estimated basis from the algorithms. All our experiments on synthetic data assumes the data model described in this paper under Assumption \ref{amain}. First in this section we provide simulation results to validate the bound proposed in the algorithm.
 \subsection{Validation of bounds}\label{svalid}
Fig \ref{fvalid} plots the value of $q_\mathcal{O}$ against a range of outlier fraction $\gamma$ from $0.1$ to $0.95$. The value of $q_\mathcal{O}$ was calculated for $1000$ trials and plotted. As seen in the figure, the lower bound $\zeta$ holds for all the trials, which means the algorithm always has OIP($\alpha$) regardless of other parameters. The bound $\zeta$ as expected increase with increase in $n$, because as dimension increases, the outlier acute angles will concentrate more towards $\frac{\pi}{2}$. It decreases with $N$, which is also logical because when the number of points increases, the minimum acute angle that an outlier point make will come down. A special case where Gaussian noise $\mathcal{N}(0, \sigma_{w}^2\textbf{I})$ is also added to the system is shown in Fig \ref{fvalidn}. Each data vector $\textbf{m}_i$ was corrupted with noise before normalization and the noise level was kept at $10 dB$ for the experiment. Here too the bound holds in all the trials. Also plotted in the graphs is the expected value of the maximum of inlier scores $q_\mathcal{I}$, calculated over these $1000$ trials. This gives us an indication of ERP($\alpha$). When the value $\mathbb{E}(q_\mathcal{I})$ crosses over the threshold $\zeta$, it means that the inliers recovered will be smaller in number. As seen, when the ratio $\frac{r}{n}$ is small, the algorithm has ERP($\alpha$) even at very low inlier fraction (Fig \ref{fvalid3}). Also when the number of samples increases, for the same value of $r$ and $n$, the inlier recovery performance improves across a higher range of $\gamma$ as seen when comparing Fig \ref{fvalid2} and \ref{fvalid1}. The effect of noise in the system can be observed by comparing Fig \ref{fvalid3} and \ref{fvalidn}. $\mathbb{E}(q_\mathcal{I})$ plot is higher in Fig \ref{fvalidn} which indicates that lesser number of inliers are recovered in the noisy case when compared to the noiseless case.
\subsection{Phase transitions}
\begin{figure}[h]
	\includegraphics[width=8 cm, height = 6 cm]{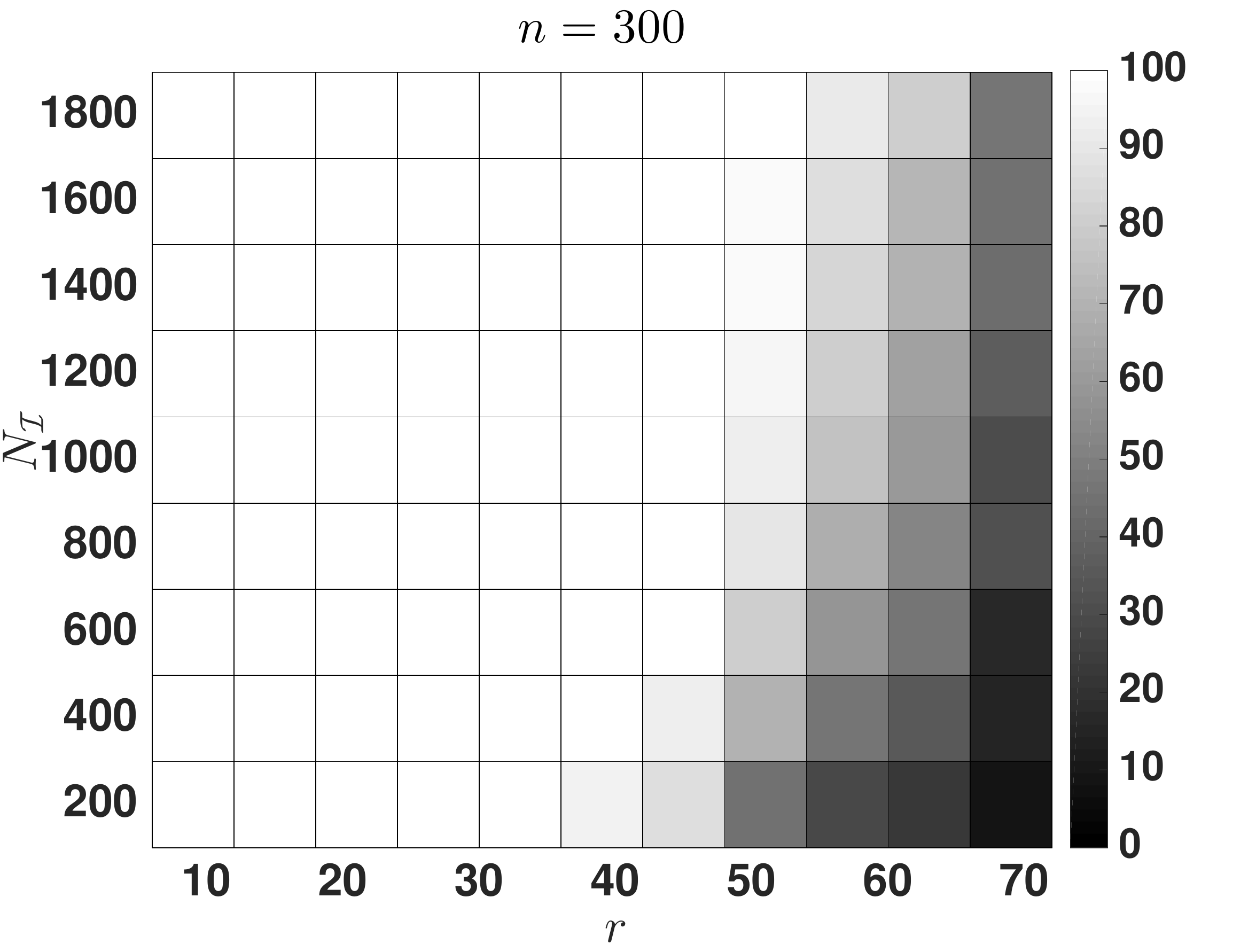}
	\caption{Phase transition plot of percentage of inliers recovered against $\frac{r}{n}$ and $N_\mathcal{I}$}
	\label{finlierrec}
\end{figure}
In this section, we look at the properties of the proposed algorithm in terms of percentage of inliers recovered and error in subspace recovery. First we look at the percentage of inliers recovered by ROMA against varying $\frac{r}{n}$ and the number of inliers $N_{\mathcal{I}}$, as these two are the critical parameters that determines the inlier recovery property of the algorithm. Fig \ref{finlierrec} shows the phase transition on inlier recovery. White indicates $100\%$ inlier recovery and as the squares become darker, the inlier recovery becomes more poor. For this experiment we have set $n=300$ and varied $\frac{r}{n}$ from $0.02$ to $0.24$, i.e $r$ varies from $6$ to $72$, along with varying the number of inliers from $N_\mathcal{I} = 100$ to $1900$ keeping the total number of points $N=2000$. As can be interpreted from the figure, a very high percentage of inliers are recovered for even very small $N_\mathcal{I}$, when $\frac{r}{n}$ is sufficiently low. As $\frac{r}{n}$ increases, the percentage of inliers recovered decreases. This simulation result agrees with Lemma \ref{lrough}. Evaluating the requirement in Lemma \ref{lrough}, for $n=300$, whenever $r< 24$, a large portion of inliers are recovered even at very low inlier fractions. In Fig \ref{finlierrec}, even for $N_\mathcal{I}=100$, the inlier recovery is $100\%$ when the condition in Lemma \ref{lrough} is satisfied by $r$.
\begin{figure}[h]
	\includegraphics[width=8 cm, height = 6 cm]{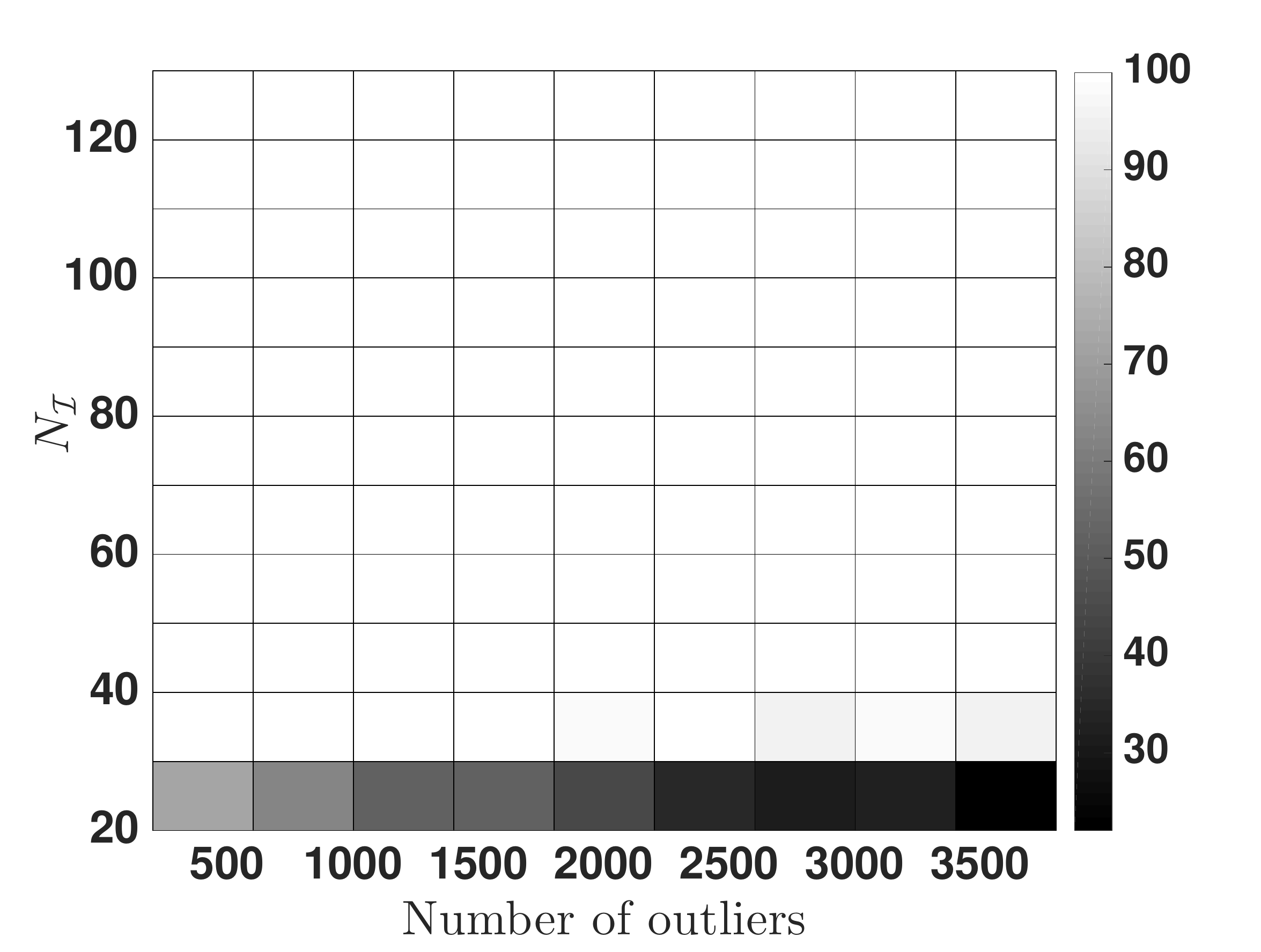}
	\caption{Phase transition plot of exact subspace recovery against $N_\mathcal{I}$ and number of outliers}
	\label{flre}
\end{figure}
In the next experiment we have looked at the subspace recovery property of the algorithm. Here we have considered the noiseless case. After removing outliers through ROMA, the subspace is recovered after doing SVD on the remaining points and choosing the left singular vectors corresponding to the non zero singular values as the recovered subspace basis. A subspace is said to be recovered when $LRE < -5$ for the estimated subspace. We have done $100$ trials each on a range of values of the number of inliers $N_\mathcal{I}$ and number of outliers. Fig \ref{flre} plots the percentage of trials in which the true subspace was recovered against $N_\mathcal{I}$ and the number of outliers, with white indicating $100\%$ success. For this phase transition plot, we have set $n=100$ and $r=10$. It is evident from Fig  \ref{flre} that, whenever there is a good enough number of inliers, no matter what the number of outliers is and for a smaller value of $\frac{r}{n}$, which is $0.1$ in the plot, the subspace is recovered with minimal error. The subspace recovery suffers when the number of inliers is low, as seen in the last row of Fig \ref{flre}. This can be compared with a similar plot in \cite{rahmani2016coherence}, even in the worst case when $\frac{N_\mathcal{I}}{r} < 4$, where \cite{rahmani2016coherence} fails, the proposed algorithm has better inlier recovery over a wider range of the number of outliers.  
\begin{table*}[h]
	\caption{Comparison of Algorithms}
	\label{table1}
	\begin{tabularx}{\textwidth}{@{}l*{10}{C}c@{}}
		\toprule
		Algorithm     &$LRE$ at $\gamma=0.25$&$LRE$  at $\gamma=0.6$ & $LRE$ at $\gamma=0.95$ & Average running time in seconds & Parameter knowledge & Free parameters  \\ 
		\midrule
		FMS  & -14.907      & -14.943          &-13.825   & 0.6414  & $r$   & $>1$  \\ 
		GMS & -$10^{-4}$   & -$10^{-4}$        &-$10^{-5}$    & 0.0759  & $r$    & Regularization   \\ 
		ORSC  & -14.936     & -14.929      &-14.918   & 100.88  & None   & $>1$   \\ 
		CoP  & -14.958      & -14.964          &-14.945     & 0.022  & $\gamma$ or $r$    & No   \\ 
		ROMA & -14.922     & -14.924          &-14.947   & 0.0546  & None   & $\alpha$   \\ 
		\bottomrule
	\end{tabularx}
\end{table*}
\subsection{Advantage of being parameter free}
Here, we highlight through a small experiment why assuming the knowledge of parameters becomes tricky in certain scenarios and how ROMA manages to avoid this pitfall. In Coherence Pursuit (CoP) \cite{rahmani2016coherence}, after arranging the points in terms of decreasing scores, the algorithm picks the first $n_s$ points to recover the subspace through PCA. Choosing $n_s$ requires the knowledge on an upper bound on the number of outliers as it is required that $n_s \leq N_\mathcal{I}$ and also should be reasonably larger than $r$ for successful subspace recovery.
\begin{figure}[h]
	\begin{subfigure}[b]{0.24\textwidth}
		\includegraphics[width=\linewidth, height=4cm]{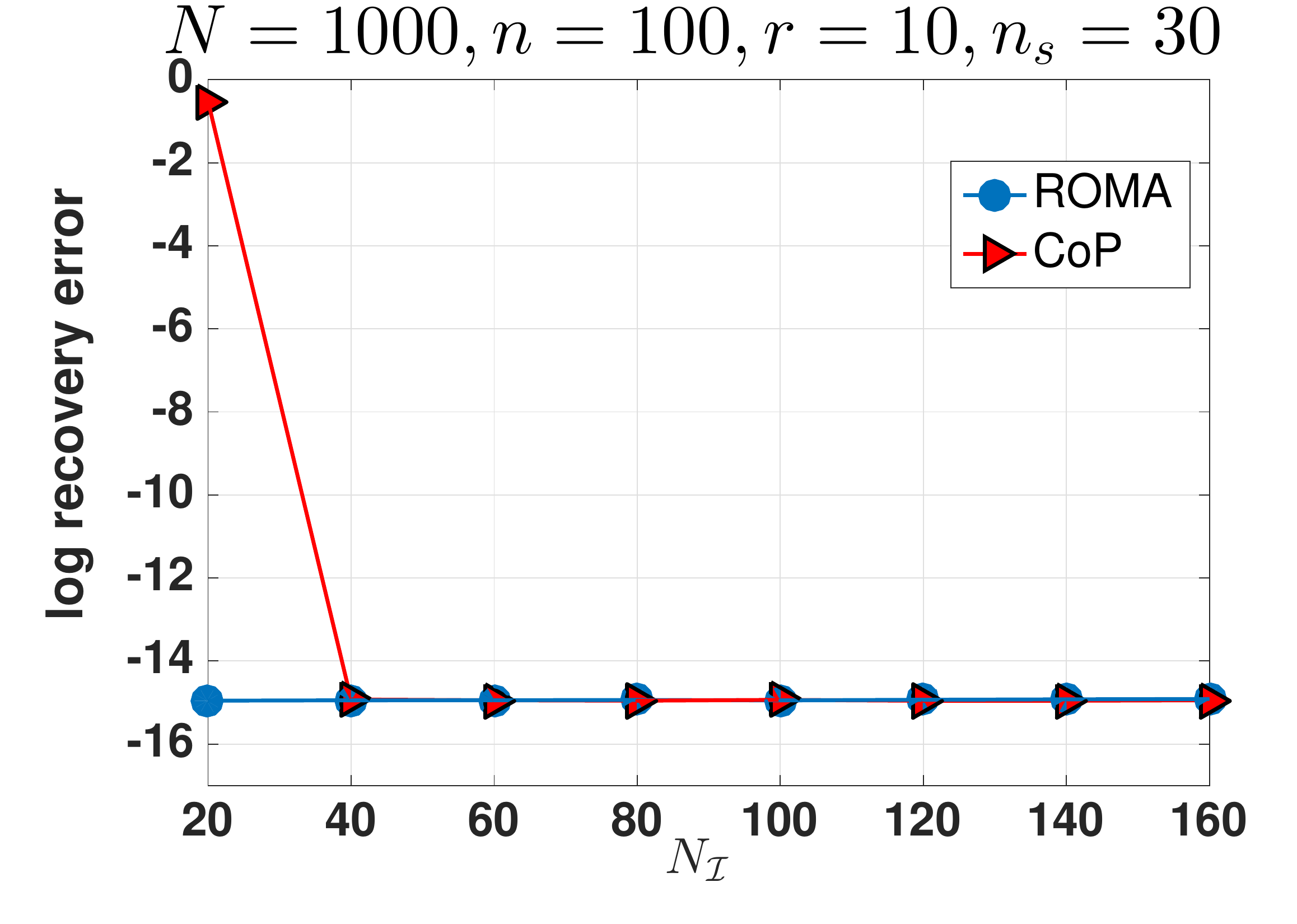}
		\caption{$n_s$ for CoP set to 30}
		\label{ftf1}
	\end{subfigure}
	\hfill
	\begin{subfigure}[b]{0.24\textwidth}
		\includegraphics[width=\linewidth, height=4cm]{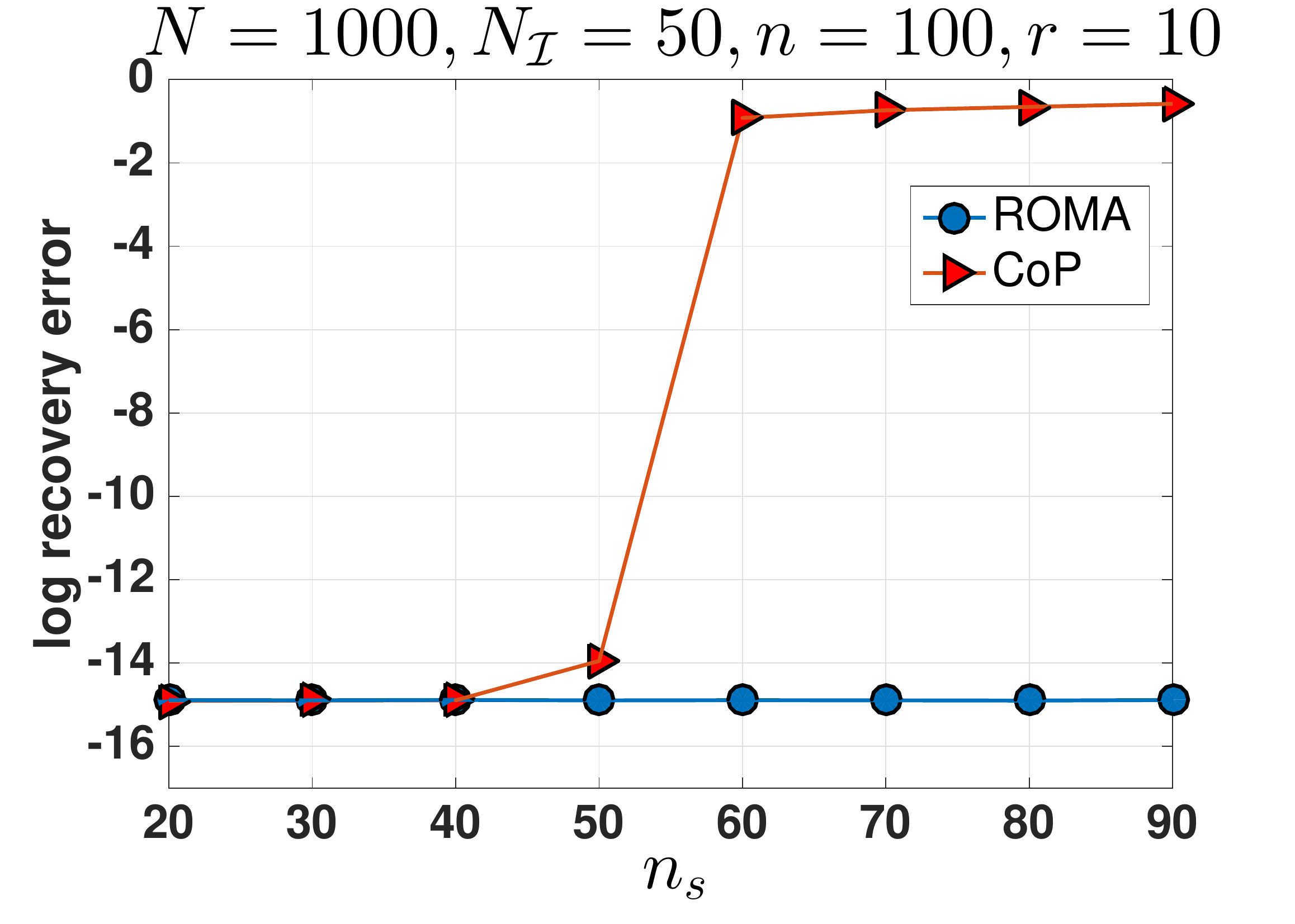}
		\caption{$N_\mathcal{I}$ set to 50, varying $n_s$}
		\label{ftf2}
	\end{subfigure}
	\caption{Demonstration of advantage of parameter free nature of ROMA}
	\label{ftf}
\end{figure}
Here we run two experiments - first for different values of $N_\mathcal{I}$ with the number of points chosen to form the subspace in CoP \cite{rahmani2016coherence} as $n_s = 30$. The results are shown in Fig \ref{ftf1}. As seen in the figure, the performance of CoP deteriorates heavily in terms of $LRE$, when the number of inliers $N_\mathcal{I}$ goes below the value of $n_s$, i.e when the assumption about outlier fraction is wrong. This is highlighted through the second experiment, where we show how $n_s$ affects the performance when $N_\mathcal{I}$ is fixed. When $n_s$ goes above $N_\mathcal{I}$, outlier points are also chosen to form the subspace which corrupts the estimated subspace badly, as seen in Fig \ref{ftf2}. In all the cases the performance of ROMA is unaffected since it is parameter free. We have set $n=100$ and $r=10$ for these experiments. This exercise highlights the importance of an algorithm being parameter free and the effects of an incorrect estimate of a parameter on the performance of a parameter dependent algorithm such as CoP.
\subsection{Comparison with other state of the art algorithms}
Here we compare the proposed algorithm with existing techniques CoP \cite{rahmani2016coherence}, Fast Median Subspace, FMS \cite{lerman2014fast}, Geometric Median Subspace GMS \cite{zhang2014novel} and the outlier removal algorithm outlier removal for subspace clustering (denoted by ORSC for convenience) in \cite{soltanolkotabi2012geometric}, in terms of the log recovery error and running time. For FMS, we used algorithm 1 in \cite{lerman2014fast}, with default parameter setting i.e $p=1,\epsilon =10^{-10}$, maximum iterations 100. For GMS as well, we used the default parameter settings and chose the last $r$ columns from the output matrix as the basis of the estimated subspace. For CoP, we implemented the first method proposed, where we used the number of data points chosen for subspace recovery as $n_s = 30$, which is a value always less than the number of inliers in our experimental settings and hence works well. For ORSC, we used the algorithm using primal-dual interior point method from the $l_1$ magic code repository \cite{candes2005l1} for solving the underlying $l_1$ optimization problem. The parameters used were changed from the default settings to improve convergence rate without degrading the performance. In Table \ref{table1}, we have summarized each algorithm in terms of its performance measured in terms of log recovery error at various outlier fractions, running time and the parameters used by the algorithm for its working. Also the last column indicates other free parameters that an algorithm requires like regularization or convergence parameters. For the experiments in Table \ref{table1}, we have set the values as $N=1000,n=100,r=10$. It is observed that ROMA performs at par with the existing methods in terms of $LRE$ without requiring the knowledge of $r$ or $\gamma$ and is nearly as quick as CoP. The algorithm ORSC which also does not use parameter knowledge, has similar $LRE$ values but is much slower compared to ROMA and also requires multiple parameters like a convergence criterion for solving the underlying $l_1$ optimization problem.
\begin{figure}[h]
	\includegraphics[width=8 cm, height = 6 cm]{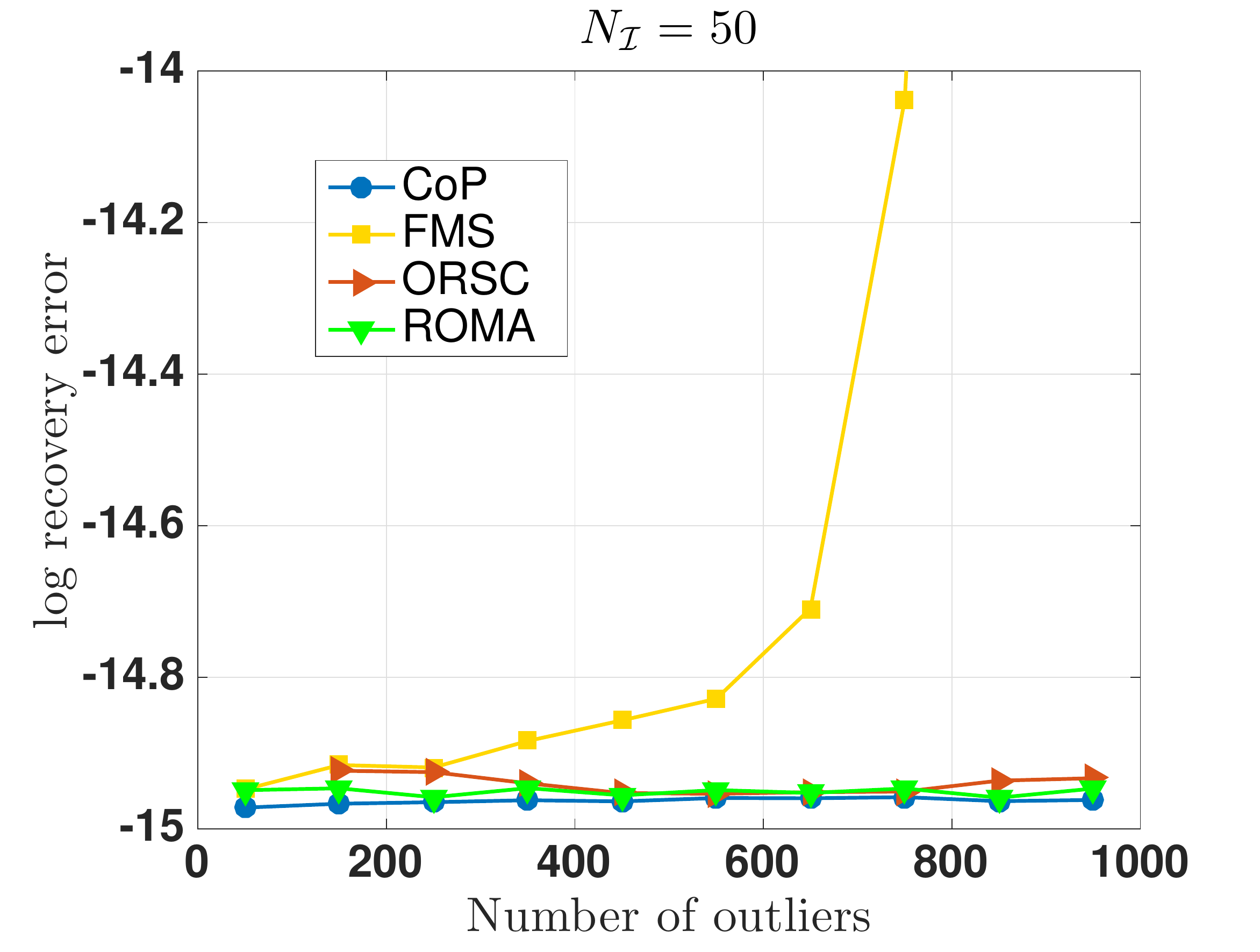}
	\caption{Log recovery error comparison between ROMA and existing algorithms}
	\label{f1}
\end{figure}
Fig \ref{f1} shows a graph on the $LRE$ of algorithms against the number of outliers in the system when the number of inliers are fixed. For this graph we have used $n=100$ and $r=20$, and also fixed the number of inliers to $N_\mathcal{I} =50$. It is seen that ROMA performs at par with all the other state of the art methods in this scenario as well.
\subsection{Performance on Real data}
\begin{figure}[h]
	\begin{subfigure}[b]{0.24\textwidth}
		\includegraphics[width=\linewidth, height=1.25cm]{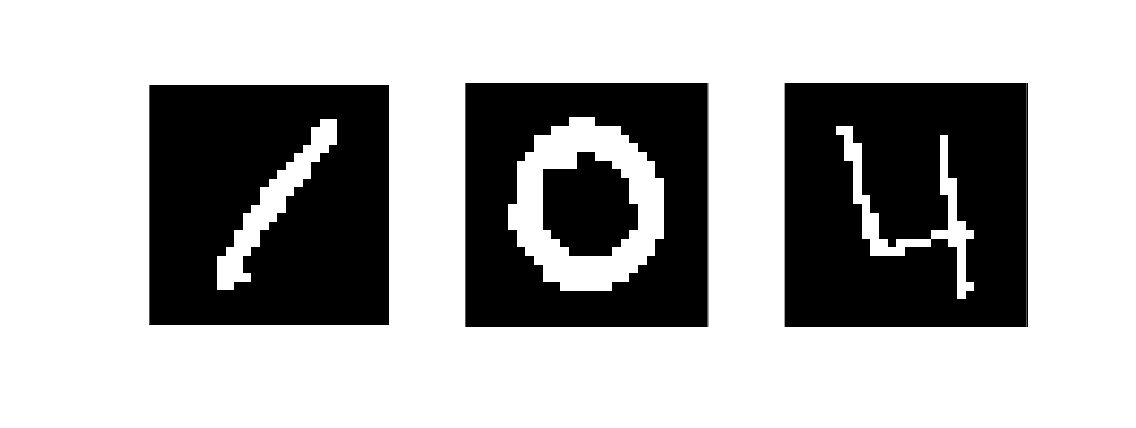}
		\caption{Original digit samples}
		\label{}
	\end{subfigure}
	\hfill
	\begin{subfigure}[b]{0.24\textwidth}
		\includegraphics[width=\linewidth, height=1.25cm]{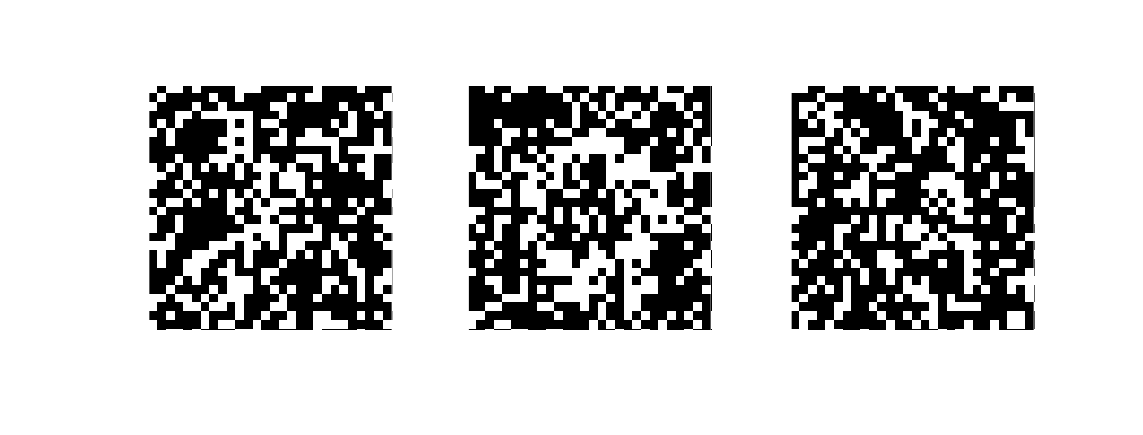}
		\caption{Corrupted digit samples}
		\label{}
	\end{subfigure}
	\caption{Real data experiment: Inliers - original, Outliers - Corrupted samples}
	\label{fapp1}
\end{figure}
\begin{figure}[h]
		\includegraphics[width=8 cm, height = 5 cm]{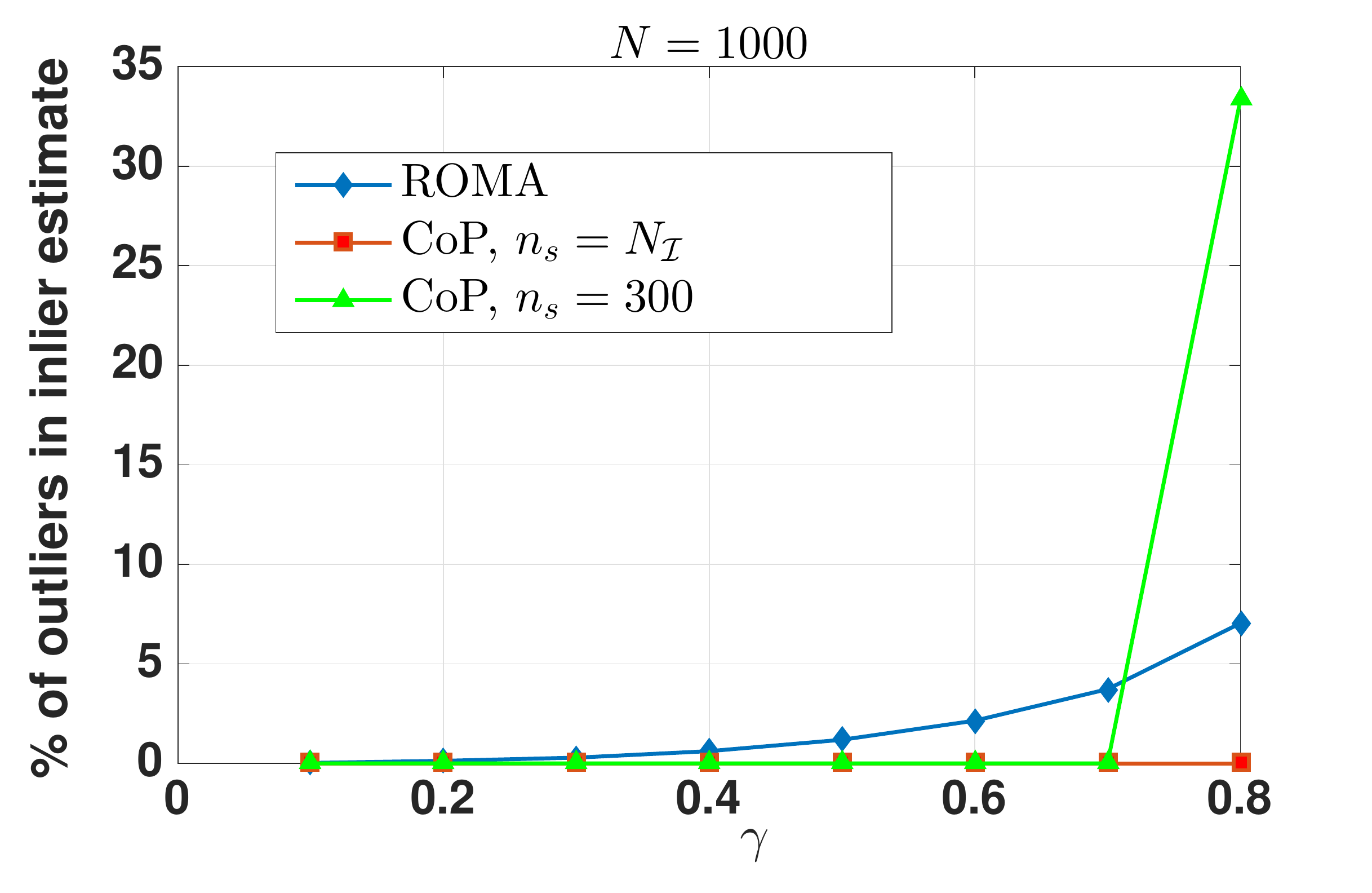}
	\caption{Performance on digit dataset - outliers wrongly classified as inliers}
	\label{fapp2}
\end{figure}
Here we perform a small experiment on real data. In this experiment, we choose a random subset of size $N=1000$ of the well-known MNIST\footnote{http://yann.lecun.com/exdb/mnist/index.html} handwritten digit data set ($28\text{x}28$ images) and corrupt a fraction $\gamma$ of them by adding heavy Gaussian noise. The samples of corrupted and uncorrupted data are given in Fig \ref{fapp1}. The corrupted points are considered outliers and the uncorrupted data points are considered inliers. The aim of the experiment is to recover all inliers and remove outliers. For processing, each image was converted to a vector of dimension $28\times28=784$ and the pixel values were rescaled from the range $[0,255]$ to $[-128,127]$ by subtracting $128$ from each. Hence the data matrix $\textbf{M }\in \mathbb{R}^{784\text{x}1000}$ has $\gamma\times1000$ outlier points. Then we perform outlier removal on this matrix $\textbf{M}$ using both ROMA and CoP and compare the results. For CoP we experimented with three cases of the parameter $n_s$ ($n_s =N_\mathcal{I}$, the number of inliers, $n_s=200$ and $n_s=300$) given as input to the algorithm. We varied the value of $\gamma$ from $0.1$ to $0.8$, i.e the number of inliers vary from $N_\mathcal{I} = 900$ to $N_\mathcal{I} =200$. Note that this experiment setting does not conform to Assumption \ref{amain} since neither the inliers come from a very low rank sub space nor the outliers after normalization follow a uniform distribution in $\mathbb{S}^{783}$, hence the theoretical guarantees need not hold. We performed the experiment over $1000$ trials with different data set in each trial and averaged the results. For all the cases of $\gamma$, all the inliers were recovered by both ROMA and CoP with $n_s = N_\mathcal{I}$. For other values of $n_s$, CoP exactly recovered $n_s$ inliers. We also compared the percentage of outliers in the inlier estimate for both the algorithms Fig \ref{fapp2}. ROMA, in worst case at $\gamma=0.8$, has $\approx 7 \%$ outliers in its inlier estimate and when $\gamma$ is low, this value is very close to $0$. Even though CoP has $0\%$ in all the cases when $n_s \leq N_\mathcal{I}$ ($n_s=N_\mathcal{I}$ case shown in Fig \ref{fapp2}), when $n_s > N_\mathcal{I}$, the outlier content in the inlier estimate goes up to $33\%$. Considering that $N_\mathcal{I}$ is unknown and that in an unsupervised scenario $n_s$ cannot be set by cross validation, this experiment also highlights the importance of the parameter free nature of ROMA. 
\section{Conclusion and future research}\label{s4}
In this paper a simple, fast, parameter free algorithm for robust PCA was proposed, which does outlier removal without assuming the knowledge of dimension of the underlying subspace and the number of outliers in the system. The performance was analyzed both theoretically and numerically. The importance of this work lies in the parameter free nature of the proposed algorithm since estimating unknown parameters or tuning for free parameters in an algorithm is a cumbersome task. Here the popular outlier model was considered with the outliers assumed to be sampled uniformly from a high dimensional hypersphere and the inliers  lying in a single low dimensional subspace. This may be considered as a starting point for developing parameter free algorithms for more complicated data models like ones with structured outliers, inliers sampled from a union of subspaces etc.
\appendices
\section{Some useful Lemmas, proofs of Lemmas \ref{lthetao} and \ref{lthetai}}\label{app1}

\begin{lemma}[Lemma 12 from \cite{cai2013distributions}]\label{l1}
	Let $\textbf{x}_1, \textbf{x}_2... \in \mathbb{S}^{n-1}$ be random points independently chosen with uniform distribution in $\mathbb{S}^{n-1}$, and let $\theta_{ij}$ be defined as in equation \ref{etheta}, then probability density function of $\theta_{ij}$ is given by
	\begin{equation}\label{e4}
	h(\theta) = \dfrac{1}{\sqrt{\pi}}\dfrac{\Gamma(\frac{n}{2})}{\Gamma(\frac{n-1}{2})} (sin\theta)^{n-2} \hskip20pt \theta \in [0, \pi]
	\end{equation}
\end{lemma}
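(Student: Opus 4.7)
The plan is to reduce to the distribution of a single coordinate by exploiting rotational invariance, and then change variables. The uniform law on $\mathbb{S}^{n-1}$ is invariant under the orthogonal group, so after conditioning on $\textbf{x}_i$ and rotating coordinates so that $\textbf{x}_i = \textbf{e}_1$, the remaining point $\textbf{x}_j$ is still uniform on $\mathbb{S}^{n-1}$ and $\cos\theta_{ij} = \textbf{x}_i^T\textbf{x}_j$ becomes simply its first coordinate $x_{j,1}$. Thus it suffices to compute the density of one coordinate of a uniform point on the sphere and then push it forward through $\theta = \cos^{-1}(\cdot)$.

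Next, I would derive the density of $x_{j,1}$ with a standard slice-surface-area calculation. The level set $\{x \in \mathbb{S}^{n-1} : x_1 = t\}$ is a sphere of radius $\sqrt{1-t^2}$ sitting in the hyperplane $x_1 = t$, so it carries $(n-2)$-dimensional measure proportional to $(1-t^2)^{(n-2)/2}$. Converting horizontal slab thickness $dt$ into arclength along a generating great circle introduces an additional factor $(1-t^2)^{-1/2}$, so the density of $x_{j,1}$ is proportional to $(1-t^2)^{(n-3)/2}$ on $[-1,1]$. The normalizing constant equals the ratio $\omega_{n-2}/\omega_{n-1}$ of the surface areas of $\mathbb{S}^{n-2}$ and $\mathbb{S}^{n-1}$, which using $\omega_{k-1} = 2\pi^{k/2}/\Gamma(k/2)$ simplifies to $\Gamma(n/2)/(\sqrt{\pi}\,\Gamma((n-1)/2))$, yielding
\begin{equation*}
f_{X_1}(t) = \frac{\Gamma(n/2)}{\sqrt{\pi}\,\Gamma((n-1)/2)}\,(1-t^2)^{(n-3)/2}, \qquad t \in [-1,1].
\end{equation*}

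To finish, I would change variables via $t = \cos\theta$ for $\theta \in [0,\pi]$. Since $1 - t^2 = \sin^2\theta$, the factor $(1-t^2)^{(n-3)/2}$ becomes $(\sin\theta)^{n-3}$, and the Jacobian $|dt/d\theta| = \sin\theta$ supplies one further factor of $\sin\theta$, producing exactly $(\sin\theta)^{n-2}$ together with the claimed prefactor. I do not expect any genuine obstacle; the only thing requiring care is bookkeeping the slice-thickness factor and the Jacobian together, since these jointly determine the exponent of $\sin\theta$ and it is easy to be off by one. Everything else is a direct computation, and the result follows immediately.
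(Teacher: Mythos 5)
Your proof is correct. Note, however, that the paper itself does not prove this lemma at all: it is imported verbatim as Lemma 12 of the cited reference \cite{cai2013distributions}, so there is no in-paper argument to compare against. Your derivation --- rotational invariance to reduce $\cos\theta_{ij}$ to the first coordinate of a uniform point on $\mathbb{S}^{n-1}$, the slice-area computation giving the marginal density $\frac{\Gamma(n/2)}{\sqrt{\pi}\,\Gamma((n-1)/2)}(1-t^2)^{(n-3)/2}$, and the change of variables $t=\cos\theta$ contributing the extra factor $\sin\theta$ to reach the exponent $n-2$ --- is the standard route and the bookkeeping you flagged (the $(1-t^2)^{-1/2}$ slab-thickness factor combined with the Jacobian) comes out exactly right, so this is a complete, self-contained proof of a statement the paper only cites.
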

Lemma \ref{l1} implies that the angles $\theta_{ij}$ are identically distributed $\forall i, j$, $i\neq j$ with the pdf $h(\theta)$. The expectation of this distribution is $\frac{\pi}{2}$ and the angles concentrate around $\frac{\pi}{2}$ as $n$ grows. Using the results in \cite{cai2013distributions}, we will state the following:
\begin{remark}\label{r1}
	$h(\theta)$ can be approximated by the pdf of Gaussian distribution with mean $\frac{\pi}{2}$ and variance $\frac{1}{n-2}$, for higher dimensions specifically for $n \geq 5$. In fact $\theta_{ij}$ converges weakly in distribution to $\mathcal{N}(\frac{\pi}{2},\frac{1}{n-2})$ as $n \to \infty$. 
\end{remark}
The remark has been validated in \cite{cai2013distributions}. 
Using the above results now we can prove Lemmas \ref{lthetao} and \ref{lthetai}.
\begin{proof}[\textbf{Proof of Lemma \ref{lthetao}}]
	Now, in our problem setting, by using Assumption \ref{amain}, we have set the outliers to be chosen uniformly at random from all the points in $\mathbb{S}^{n-1}$ and the subspace $\mathcal{U}$ is also chosen uniformly at random. Hence to an outlier, all the other points are just a part of a set of uniformly chosen independent points in $\mathbb{S}^{n-1}$. The results in the lemma then follows directly from Lemma \ref{l1}, Remark \ref{r1} and Assumption \ref{amain}.
\end{proof}
\begin{proof}[\textbf{Proof of Lemma \ref{lthetai}}]
	Assuming that the dimension of the subspace where the inliers lie, $r$ is also large enough i.e $r >5$, the points within a subspace are uniformly chosen points in the hypersphere $\mathbb{S}^{r-1}$ and their distribution is as in equation (\ref{e4}) with $n$ replaced by $r$. Hence from Lemma \ref{l1}, Remark \ref{r1} and Assumption \ref{amain}, the lemma is proved.
\end{proof}
Finally in this section, we also state an important result from \cite{cai2013distributions} used in the proof of Theorem \ref{tthr}. 
\begin{lemma}[Theorem 2 from \cite{cai2013distributions}]\label{lminangle}
	Suppose there are $p$ vectors selected uniformly at random from the unit hypersphere $\mathbb{S}^{n-1}$ Let $\theta_{min}^p$ denote the minimum pairwise angle amongst them i.e $\theta_{min}^p = \underset{1\leq i < j \leq p}{\min}\text{	}\theta_{ij}$	and let $\theta_{max}^p  = \underset{1\leq i < j \leq p}{\max}\text{	}\theta_{ij}$, where $\theta_{ij}$'s are as defined in (\ref{etheta}). Then both $p^{\frac{2}{n-1}}\theta_{min}^p$ and $p^{\frac{2}{n-1}}(\pi-\theta_{max}^p)$ converge weakly as $p \to \infty$, to the distribution given by:
	\begin{align}
	F(x)&=\begin{cases}
	1-e^{-Kx^{n-1}} & \text{for } x\geq 0\\
	0 & \text{for } x < 0\\
	\end{cases}
	\end{align}
	where 
	\begin{equation}\label{eK}
	K = \dfrac{1}{\sqrt{4\pi}}\dfrac{\Gamma(\frac{n}{2})}{\Gamma(\frac{n+1}{2})}
	\end{equation}
\end{lemma}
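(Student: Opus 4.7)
The plan is to identify $\{\theta_{\min}^p > x\}$ with $\{N_x = 0\}$, where $N_x$ counts the pairs $(i,j)$ with $\theta_{ij}\leq x$, and then apply a Poisson approximation to $N_x$ under the scaling $x = x_p := y\, p^{-2/(n-1)}$. This is the canonical route for extreme-value laws of weakly dependent families, and once $N_{x_p}$ is shown to be asymptotically Poisson$(\lambda)$ with $\lambda = K y^{n-1}$, the identity $\mathbb{P}(p^{2/(n-1)}\theta_{\min}^p > y) = \mathbb{P}(N_{x_p}=0) \to e^{-K y^{n-1}}$ falls out immediately.

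For the single-pair tail, a routine expansion of the density from Lemma \ref{l1} near the origin (using $\sin^{n-2}\theta = \theta^{n-2}(1+O(\theta^2))$) gives
\[
\mathbb{P}(\theta_{ij}\leq x) = \frac{1}{\sqrt\pi}\frac{\Gamma(n/2)}{\Gamma((n-1)/2)}\,\frac{x^{n-1}}{n-1}\,(1+o(1))
\]
as $x\downarrow 0$. Applying the Gamma identity $\Gamma((n+1)/2) = \tfrac{n-1}{2}\Gamma((n-1)/2)$ and multiplying by $\binom{p}{2}$ shows $\mathbb{E}[N_{x_p}] \to K y^{n-1}$ with $K$ as in the statement, so the expected count already carries the right Poisson parameter. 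The remaining work is to upgrade this to convergence in distribution.

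The obstruction to Poisson convergence is that the indicators $\mathbf{1}[\theta_{ij}\leq x_p]$ are only pairwise independent, as the paper emphasizes; pairs sharing a common vertex are correlated. The natural tool is the Chen--Stein method, which bounds the total variation distance from a Poisson law by $b_1+b_2$, where $b_1$ sums $\mathbb{P}(\theta_{ij}\leq x_p)\mathbb{P}(\theta_{kl}\leq x_p)$ over vertex-sharing configurations of pairs and $b_2$ sums joint probabilities $\mathbb{P}(\theta_{ij}\leq x_p,\theta_{ik}\leq x_p)$. Conditioning on the shared vertex reduces the joint probability in $b_2$ to a product of two independent spherical-cap probabilities, hence of order $x_p^{2(n-1)}$. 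Since the number of vertex-sharing configurations is $O(p^3)$ and $x_p^{2(n-1)} = O(p^{-4})$, both $b_1$ and $b_2$ are $O(p^{-1})$ and vanish, yielding $\mathbb{P}(N_{x_p}=0)\to e^{-Ky^{n-1}}$.

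The statement for $p^{2/(n-1)}(\pi-\theta_{\max}^p)$ follows by the symmetry $h(\theta) = h(\pi-\theta)$, which holds because $\sin^{n-2}$ is symmetric about $\pi/2$. Under the substitution $\theta\mapsto\pi-\theta$, maxima become minima and $\pi-\theta_{ij}$ has the same distribution as $\theta_{ij}$, so the entire Poisson argument above transfers verbatim. The main obstacle in the whole proof is the Chen--Stein step: one must actually verify that the rotational invariance of the uniform law on $\mathbb{S}^{n-1}$ genuinely decouples the shared-vertex conditional probabilities into the clean product of cap asymptotics that drives $b_2\to 0$, since without this decoupling the dependence between neighbouring pairs could spoil the Poisson limit.
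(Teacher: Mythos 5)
The paper offers no proof of this lemma at all: it is imported verbatim as Theorem~2 of \cite{cai2013distributions}, and the only original content surrounding it is the short remark that converts the scaled weak limit into the unscaled form $\mathbb{P}(\theta_{min}^p\leq x)\to 1-e^{-Kp^2x^{n-1}}$. Your Chen--Stein/Poissonization sketch is therefore not an alternative to anything in this paper but essentially a reconstruction of the argument in the cited source, and its structure is sound: index pairs with all four indices distinct give genuinely independent indicators, vertex-sharing pairs give events that are conditionally independent given the common point (so the joint probability factorizes exactly, by rotational invariance, into the product of two cap probabilities --- the ``main obstacle'' you worry about at the end is in fact immediate), and the resulting $b_1,b_2=O(p^3\,x_p^{2(n-1)})=O(p^{-1})$ bounds vanish. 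The reflection $\theta\mapsto\pi-\theta$ handles $\theta_{max}^p$ as you say.

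The one concrete error is in your constant. From Lemma \ref{l1},
\[
\mathbb{P}(\theta_{ij}\leq x)\sim\frac{1}{\sqrt{\pi}}\frac{\Gamma(\frac n2)}{\Gamma(\frac{n-1}{2})}\frac{x^{n-1}}{n-1}=\frac{1}{2\sqrt{\pi}}\frac{\Gamma(\frac n2)}{\Gamma(\frac{n+1}{2})}x^{n-1},
\]
and multiplying by $\binom{p}{2}\sim p^2/2$ and substituting $x=y\,p^{-2/(n-1)}$ gives
\[
\mathbb{E}[N_{x_p}]\to\frac{1}{4\sqrt{\pi}}\frac{\Gamma(\frac n2)}{\Gamma(\frac{n+1}{2})}\,y^{n-1},
\]
which is \emph{half} of the $K$ printed in (\ref{eK}), since $\sqrt{4\pi}=2\sqrt{\pi}\neq 4\sqrt{\pi}$. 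Your claim that the mean converges to ``$Ky^{n-1}$ with $K$ as in the statement'' is therefore not supported by your own asymptotic expansion. What your computation actually recovers is the constant of the original theorem, $K=\frac{1}{4\sqrt{\pi}}\Gamma(\frac n2)/\Gamma(\frac{n+1}{2})$; equation (\ref{eK}) is a transcription typo, and the paper's threshold (\ref{eth}) silently uses $1/K=4\sqrt{\pi}\,\Gamma(\frac{n+1}{2})/\Gamma(\frac n2)$, i.e.\ the correct value. So the route is right, but you must carry the factor $\binom{p}{2}\approx p^2/2$ honestly and land on $4\sqrt{\pi}$ in the denominator rather than forcing agreement with the misprinted constant.
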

This lemma gives us the asymptotic distribution of the minimum principal angle amongst all the angles formed between any two points from a set of $p$ uniformly chosen random points on the hypersphere $\mathbb{S}^{n-1}$. Hence as $p \to \infty$, $\mathbb{P}(p^{\frac{2}{n-1}}\theta_{min}^p\leq y) =  \mathbb{P}(p^{\frac{2}{n-1}}(\pi - \theta_{max}^p)\leq y) = 1-e^{-Ky^{n-1}}$. Equivalently, $\mathbb{P}(\theta_{min}^p\leq \frac{y}{p^{\frac{2}{n-1}}}) =  \mathbb{P}((\pi - \theta_{max}^p)\leq \frac{y}{p^{\frac{2}{n-1}}}) = 1-e^{-Ky^{n-1}}$. Use $x =  \frac{y}{p^{\frac{2}{n-1}}}$ and get $\mathbb{P}(\theta_{min}^p\leq x) =  \mathbb{P}(\pi - \theta_{max}^p\leq x) = 1-e^{-K p^2 x^{n-1}}$.
\section{Properties of $\phi_{ij}$}\label{app3}
Here, we will look at $\phi_{ij}$ as defined in (\ref{ephi}). Before we characterize its distribution, we prove the following result:
\begin{lemma}\label{ltphi}
	Let a collection of indices be denoted as $\mathcal{J} = \{(i, j)\text{	}|\text{	}i \in \mathcal{J}_1, j \in \mathcal{J}_2, \text{	} i \neq j\}$, where $\mathcal{J}_1$ and $\mathcal{J}_2$ are any index sets. Let $0 \leq x \leq \frac{\pi}{2}$. 
	Let $E^{\phi}$ denote event that $\underset{\mathcal{J}}{\min} \text{		}\phi_{ij} \leq x$ 
	and let $ E^{\theta}_{min}$ denote event $\underset{\mathcal{J}}{\min}\text{		}\theta_{ij} \leq x$ and $ E^{\theta}_{max}$ denote the event $\pi-\underset{\mathcal{J}}{\max}\text{		}\theta_{ij} \leq x$, then the following are true
	\begin{equation}\label{eangle}
	\begin{aligned}
	\text{a)	}&\mathbb{P}(E^{\phi}) \leq \mathbb{P}(E^{\theta}_{min}) +\mathbb{P}(E^{\theta}_{max})\hskip20pt
	\text{b)	}\mathbb{P}(E^{\phi}) \geq \mathbb{P}(E^{\theta}_{min}) \\
	\text{c)	}&\mathbb{P}((E^{\phi})^c) \geq \mathbb{P}((E^{\theta}_{min})^c) +\mathbb{P}((E^{\theta}_{max})^c)-1\\
		\text{d)	}&\mathbb{P}((E^{\phi})^c) \geq \mathbb{P}((E^{\theta}_{min})^c) \\
	\end{aligned}
	\end{equation}
\end{lemma}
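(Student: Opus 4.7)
The plan is to reduce all four probability inequalities to a single set-theoretic identity relating $E^{\phi}$ to the two events involving principal angles. Once that identity is in hand, each of (a)--(d) follows from either the union bound, monotonicity of probability, or taking complements; no further probabilistic machinery is required.

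The starting point is to rewrite $\phi_{ij}$ using equation (\ref{ephi}) as $\phi_{ij} = \min(\theta_{ij},\, \pi - \theta_{ij})$ for every pair $(i,j)$. For any $x \in [0, \pi/2]$ this yields the pointwise equivalence $\{\phi_{ij} \le x\} = \{\theta_{ij} \le x\} \cup \{\theta_{ij} \ge \pi - x\}$. Taking the union over $(i,j) \in \mathcal{J}$ on both sides then produces the key identity
\begin{equation*}
E^{\phi} \;=\; \Bigl\{\min_{(i,j) \in \mathcal{J}} \theta_{ij} \le x\Bigr\} \;\cup\; \Bigl\{\pi - \max_{(i,j) \in \mathcal{J}} \theta_{ij} \le x\Bigr\} \;=\; E^{\theta}_{\min} \cup E^{\theta}_{\max}.
\end{equation*}

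With this identity in hand, part (a) is just the union bound applied to the right-hand side, and part (b) follows from the inclusion $E^{\theta}_{\min} \subseteq E^{\phi}$. Parts (c) and (d) are then immediate by complementing (a) and (b) and invoking $\mathbb{P}(A^c) = 1 - \mathbb{P}(A)$: (c) is the Fr\'echet/Bonferroni-type lower bound on the probability of the intersection $(E^{\theta}_{\min})^c \cap (E^{\theta}_{\max})^c$, while (d) captures monotonicity of $\mathbb{P}(\cdot^c)$ with respect to event inclusion.

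There is no genuine obstacle here. The only point requiring mild care is the hypothesis $x \in [0, \pi/2]$: it is exactly what makes the two pieces of $\{\phi_{ij} \le x\}$ cover the event correctly within $[0, \pi]$ and is why no further restrictions on $x$ are needed. Crucially, the argument operates at the level of algebraic inclusions of events and never touches the joint distribution of the $\theta_{ij}$'s, which is precisely why the bounds apply to arbitrary index collections $\mathcal{J}_1, \mathcal{J}_2$ even though the underlying $\theta_{ij}$'s are only pairwise independent in our setting.
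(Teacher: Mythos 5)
Your proposal is correct and follows essentially the same route as the paper: both rest on the set identity $E^{\phi} = E^{\theta}_{min} \cup E^{\theta}_{max}$ followed by the union bound and De Morgan's laws, though your derivation of that identity (distributing the pairwise equivalence $\{\phi_{ij}\le x\}=\{\theta_{ij}\le x\}\cup\{\theta_{ij}\ge \pi-x\}$ over all pairs in $\mathcal{J}$) is cleaner than the paper's argument about where the minimizing pair $(i^*,j^*)$ can sit. One caution on part (d): complementing (b), as you suggest, gives $\mathbb{P}((E^{\phi})^c) \le \mathbb{P}((E^{\theta}_{min})^c)$, which is the \emph{reverse} of the inequality displayed in the lemma; the printed (d) is a sign typo (it would contradict (b) unless equality held, and the $\le$ direction is the one actually invoked later in the proof of Lemma \ref{lerpb}), so your argument yields the intended bound, but you should state the direction explicitly rather than endorsing the displayed one.
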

\begin{proof}
	Let  $\underset{\mathcal{J}}{\min} \text{		}\phi_{ij}$ occur at a pair $i^*, j^*$. Then $\phi_{i^*j^*} \leq x$, occurs when either $\theta_{i^*j^*} \leq x$ or $\pi - \theta_{i^*j^*} \leq x$. This is a direct result of the definition of $\phi_{ij}$ as in equation (\ref{ephi}). 
	Also note that in this case, either $\theta_{i^*j^*} $ is the minimum value amongst $\theta_{ij}$'s or $\theta_{i^*j^*} $ is the maximum value amongst $\theta_{ij}$'s. Suppose it is neither, then $\exists$ some $i', j' \in \mathcal{J}$ such that $\theta_{i^*j^*} >  \theta_{i'j'}$ and also some $i'', j''$ such that $\theta_{i^*j^*} <  \theta_{i''j''}$. Then if $\theta_{i^*j^*} \leq \frac{\pi}{2}$, then clearly $\phi_{i^*j^*} > \phi_{i'j'}$, also if $\theta_{i^*j^*} > \frac{\pi}{2}$, then $\phi_{i^*j^*} > \phi_{i''j''}$, which leads to a contradiction.
	Thus we can conjecture that  $E^{\phi} =E^{\theta}_{min} \cup E^{\theta}_{max} $, the results then follow from simple probability union bounds and application of Demorgan's laws.
\end{proof}
\begin{lemma}\label{lphichar}
	The pdf of $\phi_{ij}$ is given by $g(\phi) = 2h(\phi)$, $\phi \in [0, \frac{\pi}{2}]$, where $h(.)$ is the pdf function of the corresponding $\theta_{ij}$. 
\end{lemma}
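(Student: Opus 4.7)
The plan is to exploit the symmetry of the density $h(\theta)$ about $\pi/2$ and apply a standard folding argument. First I would observe that the pdf $h(\theta) = \frac{1}{\sqrt{\pi}}\frac{\Gamma(n/2)}{\Gamma((n-1)/2)}(\sin\theta)^{n-2}$ (and the analogous one for the inlier case with $r$ in place of $n$) satisfies $h(\theta) = h(\pi - \theta)$, since $\sin(\pi - \theta) = \sin\theta$. This symmetry is the crucial structural fact that makes the final constant equal to exactly $2$.

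Next, using the piecewise definition of $\phi_{ij}$ in (\ref{ephi}), I would compute the CDF of $\phi_{ij}$. For any $x \in [0, \pi/2]$, the event $\{\phi_{ij} \leq x\}$ decomposes into the two disjoint events $\{\theta_{ij} \leq x\}$ (the branch where $\theta_{ij} \leq \pi/2$) and $\{\pi - \theta_{ij} \leq x\} = \{\theta_{ij} \geq \pi - x\}$ (the branch where $\theta_{ij} > \pi/2$). Therefore
\begin{equation*}
G(x) := \mathbb{P}(\phi_{ij} \leq x) = \int_{0}^{x} h(\theta)\, d\theta + \int_{\pi - x}^{\pi} h(\theta)\, d\theta.
\end{equation*}
Performing the change of variable $\theta' = \pi - \theta$ in the second integral and then invoking the symmetry $h(\pi - \theta') = h(\theta')$, the second integral becomes $\int_{0}^{x} h(\theta')\, d\theta'$, so $G(x) = 2 \int_{0}^{x} h(\theta)\, d\theta$.

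Finally I would differentiate with respect to $x$ to obtain $g(\phi) = G'(\phi) = 2 h(\phi)$ for $\phi \in [0, \pi/2]$, and verify the normalization $\int_{0}^{\pi/2} 2h(\phi)\, d\phi = 1$ (which follows immediately since $\int_{0}^{\pi} h(\theta)\, d\theta = 1$ and $h$ is symmetric about $\pi/2$). I do not anticipate any real obstacle: the argument is a textbook folding computation, and the only thing to be careful about is that the mass on the boundary $\{\theta = \pi/2\}$ is zero (since $h$ is absolutely continuous), so the two cases in the definition of $\phi_{ij}$ assemble consistently. The same proof works verbatim for the inlier case by replacing $n$ with $r$ in $h$.
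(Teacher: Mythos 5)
Your proposal is correct and follows essentially the same route as the paper's proof: decompose $\{\phi_{ij}\le x\}$ into the disjoint events $\{\theta_{ij}\le x\}$ and $\{\pi-\theta_{ij}\le x\}$, fold the second integral using the symmetry of $h$ about $\frac{\pi}{2}$, and differentiate. The extra remarks on normalization and the null boundary event are fine but not needed.
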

	\begin{proof}
			$\theta_{ij}$ is the principal angle and hence $0 \leq \theta_{ij} \leq \pi$, then
			\begin{align*}
			\mathbb{P}(\phi_{ij} \leq x) &= \mathbb{P}(\theta_{ij} \leq x \text{	or		} \pi-\theta_{ij} \leq x ) \\
			&\text{Since the events are disjoint for $0 \leq x \leq \frac{\pi}{2}$}\\
			&= \mathbb{P}(\theta_{ij} \leq x) +  \mathbb{P}(\pi-\theta_{ij} \leq x)\\
			&=  \int\limits_{0}^xh(\theta)d\theta + \int\limits_{\pi-x}^{\pi}h(\theta)d\theta  
			\end{align*}
			Since $h(.)$ given by (\ref{e4}) is symmetric around $\frac{\pi}{2}$, 
			\begin{align*}
				\mathbb{P}(\phi_{ij} \leq x) &=	\int\limits_{0}^x2\times h(\phi)d\phi \hskip30pt
				\text{for } 0 \leq x \leq \frac{\pi}{2}.
			\end{align*}  
			Differentiating we get the pdf.
	\end{proof}	
We will use the Gaussian approximation of the pdf $h(.)$ and obtain the approximate mean and variance values for $\phi_{ij}$ using the following lemma. 
\begin{lemma}\label{l3}
	Let $U \sim \mathcal{N}(\mu, \sigma^2)$, define a random variable $V$ as:
	\begin{align*}
	V=\begin{cases}
	U & \text{for } U\leq \mu\\
	2\mu-U & \text{for } U > \mu\\
	\end{cases}
	\end{align*}
	The expectation and variance of $V$ are given by $\mathbb{E}(V) = \mu-\sqrt{\frac{2}{\pi}}\sigma$ and $var(V) = \sigma^2(1-\frac{2}{\pi})$. 
	Also $V > \mu-c\sigma$ w.p $ 2F_{\mathcal{N}}(c)-1$.
\end{lemma}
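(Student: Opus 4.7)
My plan is to collapse the two-case definition of $V$ into a single expression, namely $V = \mu - |U - \mu|$. This identity is immediate from the definition: when $U \leq \mu$ we have $\mu - |U-\mu| = \mu - (\mu - U) = U$, and when $U > \mu$ we have $\mu - |U-\mu| = \mu - (U - \mu) = 2\mu - U$. Once this reformulation is in hand, all three claims of the lemma reduce to standard properties of the half-normal distribution of $|U-\mu|$.

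First I would standardize by setting $Z = (U - \mu)/\sigma \sim \mathcal{N}(0,1)$, so that $|U - \mu| = \sigma |Z|$. Then I would compute $\mathbb{E}|Z| = \sqrt{2/\pi}$ via the standard integral $\int_{-\infty}^{\infty} |z| \tfrac{1}{\sqrt{2\pi}} e^{-z^2/2} dz = 2 \int_{0}^{\infty} z \tfrac{1}{\sqrt{2\pi}} e^{-z^2/2} dz$, and I would note that $\mathbb{E}|Z|^2 = \mathbb{E}Z^2 = 1$, so that $\mathrm{var}(|Z|) = 1 - 2/\pi$. Linearity of expectation and the scaling rules for variance then yield $\mathbb{E}[V] = \mu - \sigma \sqrt{2/\pi}$ and $\mathrm{var}(V) = \mathrm{var}(|U-\mu|) = \sigma^2 (1 - 2/\pi)$, which are the first two claims.

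For the third claim, I would observe that the event $\{V > \mu - c\sigma\}$ is equivalent, via the identity $V = \mu - |U-\mu|$, to $\{|U-\mu| < c\sigma\}$, i.e. to $\{-c < Z < c\}$. By symmetry of the standard normal, this has probability $F_{\mathcal{N}}(c) - F_{\mathcal{N}}(-c) = 2F_{\mathcal{N}}(c) - 1$, finishing the proof.

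No serious obstacle is anticipated; the lemma is essentially a repackaging of the folded-normal moments. The only step worth writing out carefully is the half-normal mean computation $\mathbb{E}|Z| = \sqrt{2/\pi}$, which is a one-line substitution $u = z^2/2$ in the integral above.
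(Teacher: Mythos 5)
Your proof is correct, and it takes a cleaner route than the paper's. The key move is your identity $V = \mu - |U-\mu| = \mu - \sigma|Z|$ with $Z\sim\mathcal{N}(0,1)$, which reduces everything to the half-normal moments $\mathbb{E}|Z|=\sqrt{2/\pi}$ and $\mathbb{E}|Z|^2=1$, plus the symmetry identity $\mathbb{P}(|Z|<c)=2F_{\mathcal{N}}(c)-1$ for the tail claim. The paper instead derives the density of $V$ from its cdf (obtaining $f_V(v)=2f_U(v)$ for $v\leq\mu$), computes the moment generating function $M_V(t)=2e^{\mu t+\sigma^2t^2/2}F_{\mathcal{N}}(-\sigma t)$, and differentiates it twice at $t=0$ to extract $\mathbb{E}(V)$ and $\mathbb{E}(V^2)$; the probability bound is then obtained by integrating the density directly. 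The two arguments have the same computational core (the Gaussian integral $\int_0^\infty z e^{-z^2/2}\,dz$ appears in both, once as the half-normal mean and once inside the MGF derivative), but your version avoids the MGF machinery entirely and makes the variance claim immediate, since $\mathrm{var}(V)=\mathrm{var}(\sigma|Z|)$ without any cancellation of cross terms. The paper's MGF route would pay off if higher moments were needed, but for this lemma your folded-normal reformulation is the more economical argument. No gaps.
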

\begin{proof}The cdf of $V$ is given by, 
	\begin{align*}
	F_V(v) &= \mathbb{P}(V \leq v) = \mathbb{P}(U \leq v \cup 2\mu-U \leq v)\\
	&=\mathbb{P}(U \leq v) + 1- \mathbb{P}(U \leq 2\mu-v)
	\end{align*}
	The pdf is given after differentiation. Note that since the pdf of $U$ is symmetric around $\mu$, $f_U(2\mu-v) = f_U(v)$. Thus $f_V(v) = 2f_U(u)$ for $U\leq \mu$ and $0$ otherwise.The moment generating function of $V$, $M_V(t)$ is thus given by 
	\begin{align*}
	M_V(t) &= \mathbb{E}(e^{Vt})
	= \int\limits_{-\infty}^{\mu}e^{vt}\dfrac{2}{\sqrt{2\pi}\sigma}e^{-\frac{(v-\mu)^2}{2\sigma^2}}dv \\
	&=2e^{\mu t+\frac{\sigma^2t^2}{2}}F_{\mathcal{N}}(-\sigma t) \hskip10pt
	\end{align*}
	where $F_{\mathcal{N}}(.)$ is the standard normal cdf. The last step was using a change of variable in integration $z = \frac{v-\mu}{\sigma} -\sigma t$ and definition of $F_{\mathcal{N}}(.)$. Using MGF, we can easily derive the moments of $V$ as $\mathbb{E}(V) = \Big[\dfrac{d M_V(t) }{dt}\Big]_{t=0} = \mu-\sqrt{\dfrac{2}{\pi}}\sigma$ and $\mathbb{E}(V^2)  = \Big[\dfrac{d^2 M_V(t) }{dt^2}\Big]_{t=0}= \sigma^2+\mu^2-2\mu\sigma\sqrt{\dfrac{2}{\pi}}$, using the result that $\dfrac{dF_{\mathcal{N}}(\sigma t)}{dt} = -\frac{\sigma}{\sqrt{2\pi}}e^{-\frac{\sigma^2 t^2}{2}}$. This also gives the variance as $var(V) = \mathbb{E}(V^2)-(\mathbb{E}(V))^2 = \sigma^2(1-\dfrac{2}{\pi})$.
	\begin{align*}
	    \mathbb{P}(V \leq \mu-k\sigma) &= \int\limits_{-\infty}^{\mu_V-k\sigma_V}2f_U(v)dv =2F_{\mathcal{N}}(-k)
	\end{align*}
	Hence $V > \mu-c\sigma$ w.p $1-2F_{\mathcal{N}}(-c) = 2F_{\mathcal{N}}(c)-1$. 
\end{proof}
\begin{corollary}\label{c2}
	 For $i \in \mathcal{O}$, $\mathbb{E}(\phi_{ij}) \approx \dfrac{\pi}{2} - \sqrt{\frac{2}{\pi(n-2)}}$ and $var(\phi_{ij}) \approx  \dfrac{1-\frac{2}{\pi}}{n-2}$ and $\phi_{ij}>\dfrac{\pi}{2} - \frac{c}{\sqrt{n-2}}$ with probability $2F_{\mathcal{N}}(c) -1$. When $i, j \in \mathcal{I}$, $\mathbb{E}(\phi_{ij}) \approx \dfrac{\pi}{2} - \sqrt{\frac{2}{\pi(r-2)}}$ and $var(\phi_{ij}) \approx  \dfrac{1-\frac{2}{\pi}}{r-2}$ and $\phi_{ij}>\dfrac{\pi}{2} - \frac{c}{\sqrt{r-2}}$ with probability $2F_{\mathcal{N}}(c) -1$.
\end{corollary}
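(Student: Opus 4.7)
The plan is to observe that $\phi_{ij}$, viewed as a function of $\theta_{ij}$, is exactly the folding map defining $V$ in Lemma \ref{l3} with folding point $\mu = \pi/2$. Indeed, from definition (\ref{ephi}) we have $\phi_{ij} = \theta_{ij}$ when $\theta_{ij} \leq \pi/2$ and $\phi_{ij} = \pi - \theta_{ij} = 2(\pi/2) - \theta_{ij}$ when $\theta_{ij} > \pi/2$, which coincides precisely with $V = U$ for $U \leq \mu$ and $V = 2\mu - U$ for $U > \mu$ on setting $U = \theta_{ij}$ and $\mu = \pi/2$. Hence the moments and tail probabilities established for $V$ in Lemma \ref{l3} transfer directly to $\phi_{ij}$, provided a Gaussian model for $\theta_{ij}$ is imposed.

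Next, for the outlier case $i \in \mathcal{O}$, I would invoke the Gaussian approximation $\theta_{ij} \sim \mathcal{N}(\pi/2, 1/(n-2))$ supplied by Lemma \ref{lthetao}. Substituting $\mu = \pi/2$ and $\sigma^2 = 1/(n-2)$ into the conclusions of Lemma \ref{l3} then yields, in one stroke, $\mathbb{E}(\phi_{ij}) \approx \pi/2 - \sqrt{2/[\pi(n-2)]}$, $\mathrm{var}(\phi_{ij}) \approx (1-2/\pi)/(n-2)$, and the tail bound $\mathbb{P}(\phi_{ij} > \pi/2 - c/\sqrt{n-2}) \approx 2F_{\mathcal{N}}(c) - 1$. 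For the inlier case $i,j \in \mathcal{I}$ the argument is identical, using Lemma \ref{lthetai} (valid for $r \geq 5$) which gives $\theta_{ij} \sim \mathcal{N}(\pi/2, 1/(r-2))$; replacing $n-2$ by $r-2$ throughout produces the second set of claims.

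There is no substantive obstacle: the only approximation being inherited is the Gaussian approximation of the principal-angle distribution itself, already addressed in Remark \ref{r1} and validated in \cite{cai2013distributions}. The ``$\approx$'' symbols in the corollary reflect exactly this, since Lemma \ref{l3} is an \emph{exact} calculation once its Gaussian hypothesis is granted. Thus the entire proof reduces to a one-line identification of the folding map followed by substitution of the appropriate variance into Lemma \ref{l3}.
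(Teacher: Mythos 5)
Your proof is correct and follows exactly the route the paper intends: the paper's own proof is the one-line observation that the corollary is ``a straightforward application of Lemma \ref{l3} to the definition of $\phi_{ij}$'' combined with the Gaussian approximation of the $\theta_{ij}$ distributions from Lemmas \ref{lthetao} and \ref{lthetai}, which is precisely the folding-map identification and substitution $\mu=\pi/2$, $\sigma^2 = 1/(n-2)$ (resp.\ $1/(r-2)$) that you spell out. Your version is simply a more explicit write-up of the same argument, including the correct attribution of the ``$\approx$'' to the Gaussian approximation of $h(\theta)$.
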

\begin{proof}
	This is a straightforward application of Lemma \ref{l3} to the definition of $\phi_{ij}$ and using the Gaussian approximation for the distributions of $\theta_{ij}$'s
\end{proof}
\section{Proofs of results in section \ref{sguaran}}\label{app2}
\begin{proof}[\textbf{Proof of Lemma \ref{lrough}}]
	We know for an outlier point, the angles made by an outlier has the following property from Corollary \ref{c2}, keeping $c=3$, 
	\begin{equation*}
	\phi_{ij}>\dfrac{\pi}{2} - \frac{3}{\sqrt{n-2}}, \hskip10pt  \forall i \in \mathcal{O}\hskip30pt \text{w.h.p}  
	\end{equation*} 
		We also know that for $i \in \mathcal{I}$, Since $q_i = \underset{j}{\min} \phi_{ij}$, by Jensen's inequality, $\mathbb{E}(q_i) \leq \min\{[\mathbb{E}(\phi_{ij})\text{	}i, j \in \mathcal{I}], [\mathbb{E}(\phi_{ij})\text{	}i\in \mathcal{I}, j \in \mathcal{O}]\}$. Hence using Corollary \ref{c2}, using the fact that $r<n$, 
	\begin{equation*}
	\mathbb{E}(q_i) \leq \dfrac{\pi}{2} - \sqrt{\frac{2}{\pi(r-2)}}
	\end{equation*}
	To have a reasonably good separation between the inlier and outlier scores, the following condition helps
	\begin{equation*}
	\mathbb{E}(q_i) \leq \dfrac{\pi}{2} - \frac{3}{\sqrt{n-2}}
	\end{equation*}
	This is satisfied if
	\begin{equation*}
\dfrac{\pi}{2} - \sqrt{\frac{2}{\pi(r-2)}}< \dfrac{\pi}{2} - \frac{3}{\sqrt{n-2}}
	\end{equation*}
	This when simplified gives the condition in the lemma.
\end{proof}
\begin{proof}[\textbf{Proof of Theorem \ref{trevcond}}]
	We know for the algorithm to follow ERP($\alpha$), we require $\mathbb{P}(q_\mathcal{I} \leq \zeta) \geq 1-\alpha$. Let $i \in \mathcal{I}$. 
	\begin{align*}
		\mathbb{P}(q_\mathcal{I} \leq \zeta) &= \mathbb{P}(\underset{i \in \mathcal{I}}{\max}\text{   }q_i \leq \zeta)= \mathbb{P}(q_1 \leq \zeta, q_2 \leq \zeta, ...q_{N_\mathcal{I}} \leq \zeta) \\
		& \leq \mathbb{P}(q_i \leq \zeta)
	\end{align*}
	Hence if $\mathbb{P}(q_i \leq \zeta) < 1-\alpha$, then the algorithm is sure to have no ERP($\alpha$). 
	\begin{align*}
	\mathbb{P}(q_i \leq \zeta) &= \mathbb{P}({\underset{\underset{j\neq i}{j\in \{1, 2..N\}}}{\min} \phi_{ij} \leq \zeta})= \mathbb{P}({\underset{\underset{j\neq i}{j\in \{1, 2..N\}}}{\cup} \{\phi_{ij} \leq \zeta\}})\\
	&\leq (N_\mathcal{I} -1)\mathbb{P}(\phi_{i, \mathcal{I}} \leq \zeta) + \gamma N \mathbb{P}(\phi_{i, \mathcal{O}} \leq \zeta)
	\end{align*}
	This is by union bound and the index $j$ has been changed to $\mathcal{I}$ and $\mathcal{O}$ to indicate the angles formed by the point indexed by $i$ with an inlier and outlier respectively. Now if this upper bound is less than $1-\alpha$, the algorithm cannot follow ERP($\alpha$). Thus for the algorithm to not follow ERP($\alpha$), we require, 
	\begin{equation*}
		(N_\mathcal{I} -1)\mathbb{P}(\phi_{i, \mathcal{I}} \leq \zeta) + \gamma N \mathbb{P}(\phi_{i, \mathcal{O}} \leq \zeta) < 1- \alpha
	\end{equation*}
	\begin{equation*}
\text{Rearranging, 	}N_\mathcal{I} < 1+\dfrac{1-\alpha}{F_{\phi^{\mathcal{I}}}(\zeta)} - \dfrac{\gamma N F_{\phi^{\mathcal{O}}}(\zeta)}{F_{\phi^{\mathcal{I}}}(\zeta)}, 
\end{equation*}
	where $F_{\phi^{\mathcal{I}}}(.)$ is the cdf function of $\phi_{ij}$, when $i, j \in \mathcal{I}$ and $F_{\phi^{\mathcal{O}}}(.)$ is the cdf function of $\phi_{ij}$, when $i \in \mathcal{I}, j \in \mathcal{O}$. We know the cdf characterization of $\phi_{ij}$ from Lemma \ref{lphichar} and it depends on the distribution of $\theta_{ij}$'s. Also note that $F_{\phi_{ij}}(\phi) = 2F_{\theta_{ij}}(\phi)$ for $0 \leq \phi \leq \dfrac{\pi}{2}$. As we have seen earlier, this cdf can be well approximated by a Gaussian cdf with mean $\frac{\pi}{2}$ and variance $\dfrac{1}{r-2}$, when $j \in \mathcal{I}$ and $\dfrac{1}{n-2}$, when $j \in \mathcal{O}$. 
\end{proof}
\begin{proof}[\textbf{Proof of Lemma \ref{lerpb}}]
	We will use $k$ to index the inlier points in this proof. We are looking at the below probability:
	\begin{align*}
	\mathbb{P}(q_\mathcal{I} \leq \zeta) &= \mathbb{P}(\underset{k \in \mathcal{I}}{\max}\text{   }q_k \leq \zeta)
	= 1 - \mathbb{P}(\underset{k \in \mathcal{I}}{\max}\text{   }q_k > \zeta)\\
	& = 1-  \mathbb{P}(\underset{k \in \mathcal{I}}{\cup}\text{   }\{q_k > \zeta\})\\
	& \geq 1- \sum\limits_{k \in \mathcal{I}} \mathbb{P}(q_k > \zeta)= 1- N_{\mathcal{I}}\mathbb{P}(q_k > \zeta)
	\end{align*}
	Here we apply the union bound, and the final step is because the events $q_k > \zeta$ are equiprobable. Since all the angles are identically distributed, so will their minimum over an index set. Using the definition of $q_k$
	\begin{align*}
	\mathbb{P}(q_\mathcal{I} \leq \zeta) &\geq 1- N_{\mathcal{I}}\mathbb{P}({\underset{j\in \{1, 2..N\}, j\neq k}{\min} \phi_{kj} > \zeta})
	\end{align*}
	Using Lemma \ref{lminangle}, equation \ref{eangle} (d), $\mathbb{P}({\underset{j\in \{1, 2..N\}, j\neq k}{\min} \phi_{kj} > \zeta}) \leq \mathbb{P}({\underset{j\in \{1, 2..N\}, j\neq k}{\min} \theta_{kj} > \zeta})$. Hence
	\begin{align*}
	\mathbb{P}(q_\mathcal{I} \leq \zeta) &\geq 1- N_{\mathcal{I}}\mathbb{P}({\underset{j\in \{1, 2..N\}, j\neq i}{\min} \theta_{kj} > \zeta})
	\end{align*}
\end{proof}
{
	\bibliographystyle{IEEEtran}
	\bibliography{bibl}
}
\end{document}